%% file: main.tex
\documentclass[11pt]{article}

\usepackage[final]{acl}

\usepackage{times}
\usepackage{latexsym}
\usepackage[T1]{fontenc}
\usepackage[utf8]{inputenc}
\usepackage{microtype}
\usepackage{inconsolata}

\usepackage{graphicx}
\usepackage{hyperref}
\usepackage{url}
\usepackage{booktabs}
\usepackage{amsfonts}
\usepackage{nicefrac}
\usepackage{xcolor}
\usepackage{amsmath}
\usepackage{amsthm}
\usepackage{algorithm}
\usepackage{algpseudocode}
\usepackage[section]{placeins}
\algdef{SE}[INDENT]{Indent}{EndIndent}{}{}
\usepackage{subcaption}
\usepackage{enumitem}
\usepackage{multirow}

\newtheorem{theorem}{Theorem}
\newtheorem{proposition}[theorem]{Proposition}
\newtheorem{lemma}{Lemma}

\theoremstyle{definition}

\theoremstyle{remark}

\DeclareMathOperator*{\argmin}{arg\,min}
\DeclareMathOperator*{\argmax}{arg\,max}
\newcommand{\indicator}{\mathbb{1}}
\newcommand{\expect}{\mathbb{E}}
\newcommand{\Acc}{\mathrm{Acc}}
\newcommand{\Accstar}{\Acc^*}

\title{ESPO: Error-Structured Prompt Optimization\\via Diagnose, Diversify, and Stabilize}

\author{Lihao Liu, Peng Tang, Kunwar Yashraj Singh, Shabnam Ghadar \\
        AWS Agentic AI}

\begin{document}

\maketitle

\begin{abstract}
Evolutionary prompt optimizers such as GEPA suffer from \emph{prompt bloat}: each iteration appends rules and caveats, producing prompts up to $3\times$ longer yet no more accurate. We trace this to three deficiencies - incomplete error observation, limited search diversity, and unreliable selection - and propose \textbf{ESPO} (Error-Structured Prompt Optimization), which decomposes prompt optimization into three phases: \textbf{Diagnose} clusters all training errors into structural patterns in one round; \textbf{Propose} generates candidates via four complementary strategies with independent biases; \textbf{Select} applies bootstrap stability selection. On seven public NLP benchmarks - Tweet, MMLU, GSM8K, HotpotQA, ScoNe, HoVer, and PUPA - ESPO improves average accuracy by $+$3.76 pp over the state-of-the-art (74.67\% vs.\ 70.91\% for GEPA), matching or exceeding GEPA on every dataset while producing prompts 47\% shorter (1{,}004 vs.\ 1{,}878 chars) and faster at inference. Cross-model experiments across four additional student models (Gemma~3 12B, Mistral 14B, Qwen3 32B, Claude Haiku 4.5) show ESPO yields the best average accuracy on every model tested, with the largest gap on Qwen3 GSM8K (15.00\% $\to$ 91.40\%). A generalization bound (Appendix) grounds each phase in a corresponding term of the test-time gap, and the ablation confirms a key prediction: adding diversity \emph{without} bootstrap selection actually hurts performance ($-$1.20\%).
\end{abstract}

\input{sections/introduction}
\input{sections/related_work}
\input{sections/method}

\input{sections/experiments}
\input{sections/conclusion}
\input{sections/limitations}

\bibliography{bib/references}

\clearpage
\appendix
\input{sections/appendix}

\end{document}

%% file: sections/introduction.tex
\section{Introduction}
\label{sec:introduction}

Large language models are increasingly steered through natural-language prompts rather than parameter updates~\citep{brown2020language, wei2022chain}. As tasks grow more complex, automated prompt optimizers such as APE~\citep{zhou2023large}, OPRO~\citep{yang2024large}, MIPROv2~\citep{opsahl2024optimizing}, and GEPA~\citep{agrawal2026gepa} have emerged to replace manual trial-and-error with systematic search. Among these, GEPA achieves state-of-the-art results through an evolutionary loop: it evaluates prompts on training examples, reflects on errors using a strong LLM, proposes mutations, and selects survivors via a Pareto front.

A persistent shortcoming of evolutionary prompt optimization is \emph{\textbf{prompt bloat}}: each iteration tends to append rules and caveats, so prompts grow without bound. In our experiments, GEPA produces prompts up to $3\times$ longer than ESPO's (1.87$\times$ on average) yet less accurate from the same weak start, increasing latency and token cost, and can \emph{harm} generalization by overfitting to training noise. We trace this to three structural deficiencies in the evolutionary paradigm. (i)~\textbf{Incomplete error observation:} GEPA reflects on 3--8 random errors per round; by a coupon-collector argument, observing all systematic failure patterns with 95\% probability needs ${\sim}15$ rounds, and in the meantime the optimizer accumulates redundant or contradictory rules addressing symptoms rather than root causes. (ii)~\textbf{Limited search diversity:} a single mutation operator locks in one bias profile; when it fails on certain error types, it compensates by stacking more rules - inflating length without improving accuracy. (iii)~\textbf{Unreliable selection:} point estimation on a small (${\sim}30$-example) validation set with ${\sim}10$ candidates is a multiple-testing problem, and the optimizer may pick a verbose candidate that ranks first by noise rather than true quality.

We propose \textbf{ESPO} (Error-Structured Prompt Optimization), a framework that recasts prompt optimization from evolutionary search to structured statistical estimation. ESPO decomposes optimization into three principled phases. The \emph{Diagnose} phase analyzes \emph{all} training errors and clusters them into 3--7 structural patterns, achieving complete error coverage in a single round. The \emph{Propose} phase generates candidates through $K{=}4$ complementary strategies - diagnostic revision, consolidation, ablation, and factual injection - each addressing different error types with different inductive biases. The \emph{Select} phase applies bootstrap stability selection ($B{=}20$ resamples) to reliably identify the most robust candidate under validation-set noise. ESPO subsumes existing evolutionary methods: setting the number of proposal strategies $K{=}1$, bootstrap resamples $B{=}1$, and diagnosis batch size $m{=}3$ (i.e., only $3$ errors observed per reflection round) recovers GEPA, revealing that all three sources of sub-optimality are left unaddressed.

Our contributions are threefold. \textbf{(1) Framework.} We introduce ESPO, a three-phase framework (Diagnose--Propose--Select) that unifies prompt optimization as statistical estimation, subsumes existing evolutionary methods as a degenerate case, and is grounded in a generalization bound whose three terms - bias floor, exploration gain, and selection error - each correspond to one phase~(\S\ref{sec:method}; full theory in Appendix~\ref{app:proofs}). \textbf{(2) Experiments.} On seven public benchmarks starting from an intentionally weak default prompt, ESPO averages 74.67\% accuracy versus 70.91\% for GEPA (+3.76 pp), while producing prompts 47\% shorter (1{,}004 vs.\ 1{,}878 chars) with consistently lower inference latency~(\S\ref{sec:experiments}). \textbf{(3) Cross-model generalization.} Experiments with four additional student models (Gemma~3 12B, Mistral 14B, Qwen3 32B, Claude Haiku 4.5) show ESPO yields the best average accuracy on every model tested, with the largest gap on Qwen3 GSM8K (15.00\% $\to$ 91.40\% from default; $+$56.00 pp over GEPA).

%% file: sections/related_work.tex
\section{Related Work}
\label{sec:related}

\paragraph{Automated prompt optimization.}
Manual prompt engineering techniques include chain-of-thought prompting~\citep{wei2022chain} and zero-shot reasoning~\citep{kojima2022large}. Automated approaches emerged to reduce human effort: APE~\citep{zhou2023large} uses LLMs to generate and select instructions, OPRO~\citep{yang2024large} optimizes via meta-prompting, and PromptBreeder~\citep{fernando2024promptbreeder} co-evolves task prompts and mutation operators. Within the DSPy framework~\citep{khattab2024dspy}, COPRO and MIPROv2~\citep{opsahl2024optimizing} optimize prompts through Bayesian surrogate models, while GEPA~\citep{agrawal2026gepa} achieves state-of-the-art results via evolutionary search with Pareto selection and reflective mutation.

\paragraph{Recent extensions and theoretical analysis.}
A fast-growing literature refines the evolutionary loop with richer memory or multi-objective criteria: ReflectivePrompt~\citep{zhuravlev2025reflective} (short/long-term reflection), REMO~\citep{wu2025remo} (memory-augmented reflection), MemAPO~\citep{liang2026memapo} (dual-memory for error patterns), MOPrompt~\citep{camara2025moprompt} (multi-objective NSGA-II), Optimas~\citep{wu2026optimas} (compound AI systems), Grie{\ss}haber et al.~\citep{griesshaber2025decomposed} (decomposed evolution), and Promptolution~\citep{promptolution2025} (unified framework). On the theory side, \citet{madras2025theoretical} provide PAC-Bayes generalization bounds showing that perplexity regularization prevents overfitting - a complementary principle to ESPO's bootstrap-based selection stability.

\paragraph{Gradient-based and selection-based methods.}
A separate line of work tackles prompt optimization through gradient-based or selection-based formulations. TextGrad~\citep{yuksekgonul2024textgrad} uses LLM-generated ``textual gradients'' to iteratively refine prompts, analogous to backpropagation; while powerful for differentiable-style optimization, it operates on individual examples without structured error clustering, making it susceptible to the same incomplete-observation problem as evolutionary methods. TRIPLE~\citep{shi2024triple} frames prompt selection as best-arm identification under a fixed evaluation budget, using bandit algorithms (Successive Halving, Racing) to efficiently allocate evaluations. ESPO's bootstrap stability selection targets the same selection reliability concern but through resampling rather than sequential elimination, providing complementary guarantees (Lemma~\ref{lem:bootstrap}). Despite this progress, no existing method combines structured error diagnosis to guide candidate generation with selection stability guarantees for small validation sets.

\paragraph{Error analysis and systematic failure discovery.}
Systematic error analysis identifies structured failure modes in models: failure mode discovery~\citep{eyuboglu2022domino} extracts coherent error clusters from embeddings, behavioral testing~\citep{ribeiro2020checklist} probes known failure categories, and slice-based evaluation~\citep{chen2019slice} surfaces underperforming data subsets. All of these tools diagnose \emph{what} the model gets wrong but stop short of feeding the diagnosis back into optimization; ESPO's Diagnose phase closes this loop by turning clustered error patterns into inputs for the next round of candidate generation. A parallel line of inquiry on \emph{contrastive prompting} shows that converting a task from generation to selection mitigates strong model priors at inference time; this generation-vs-selection principle is echoed at the optimization layer by ESPO, which generates $K$ diverse candidates and uses bootstrap resampling to select among them rather than committing to a single generated prompt.

\paragraph{Bootstrap, stability selection, and the MDL principle.}
The bootstrap~\citep{efron1993bootstrap} and bagging~\citep{breiman1996bagging} reduce prediction variance by resampling, and stability selection~\citep{meinshausen2010stability} uses subsampling for robust variable selection. Our Select phase adapts these principles to candidate prompt selection: a candidate that repeatedly wins across bootstrap resamples is more likely to generalize than one that wins by chance on a single val split. Finally, the MDL principle~\citep{rissanen1978modeling, grunwald2007minimum} formalizes Occam's razor, and recent work~\citep{deletang2024language, huang2024compression} connects LLM compression to benchmark performance, motivating ESPO's empirical preference for shorter prompts.

%% file: sections/method.tex
\section{ESPO Framework}
\label{sec:method}

We present ESPO, which decomposes prompt optimization into three phases: structured error diagnosis (\S\ref{sec:phase1}), multi-strategy candidate generation (\S\ref{sec:phase2}), and bootstrap stability selection (\S\ref{sec:phase3}).

\subsection{Problem Formulation}
\label{sec:formulation}

Given a task with training set $\mathcal{D}_\text{train}$, validation set $\mathcal{D}_\text{val}$, and metric $M(\cdot)$, prompt optimization seeks an instruction $p^*$ that maximizes test performance:
\begin{equation}
\small
    p^* = \argmax_{p \in \Pi} \; \expect_{(x,y) \sim \mathcal{D}_\text{test}} \big[ M(m_p(x), y) \big],
    \label{eq:objective}
\end{equation}
where $x$ is the input (e.g., a tweet to classify or a math problem to solve), $y$ is the ground-truth label (e.g., sentiment class or correct answer), $m_p$ denotes the LLM module parameterized by instruction $p$ (i.e., the model that concatenates prompt $p$ with input $x$ to produce a prediction), and $\Pi$ is the space of natural-language prompts.

\vspace{1mm}

ESPO decomposes this into three phases:
{\small
\begin{align}
    &\text{Phase 1 (Diagnose):} \quad \varphi = \mathcal{D}(\mathcal{E}_\text{train}, p) \label{eq:phase1} \\
    &\text{Phase 2 (Propose):} \quad \mathcal{P} = \mathcal{G}(p, \varphi, S_1, \ldots, S_K) \label{eq:phase2} \\
    &\text{Phase 3 (Select):} \quad p^* = \mathcal{R}(\mathcal{P}, \mathcal{D}_\text{val}, B) \label{eq:phase3}
\end{align}}
where $\mathcal{E}_\text{train}$ is the set of training errors, $\varphi$ is a structured diagnosis, $S_1, \ldots, S_K$ are proposal strategies, $\mathcal{P}$ is the candidate population, and $B$ is the number of bootstrap resamples.

\paragraph{GEPA as a degenerate case.} Let $m$ denote the diagnosis batch size (number of training errors sampled per reflection round). Setting $K{=}1$, $B{=}1$, $m{=}3$ exactly recovers GEPA~\citep{agrawal2026gepa}: \emph{(i) diagnosis} ($m{=}3 \to m{=}\text{all}$), GEPA observes an incomplete error snapshot each round; \emph{(ii) proposal diversity} ($K{=}1 \to K{=}4$), GEPA applies a single mutation operator; \emph{(iii) selection} ($B{=}1 \to B{=}20$), GEPA uses point estimation on a small validation set. Each dimension independently tightens one term in the generalization bound (Theorem~\ref{thm:espo}), and the degenerate configuration inherits all three weaknesses simultaneously.

\subsection{Phase 1: Structured Error Diagnosis}
\label{sec:phase1}

We collect all training errors under current prompt:
\begin{equation}
\small
\begin{split}
    \mathcal{E}_\text{train} = \{ & (x_i, y_i, \hat{y}_i) : M(m_p(x_i), y_i) = 0, \\
    & (x_i, y_i) \in \mathcal{D}_\text{train} \}.
\end{split}
\label{eq:errors}
\end{equation}
A reflection LLM clusters $\mathcal{E}_\text{train}$ into $K^*$ structural patterns ($K^* \in [3, 7]$):
\begin{equation}
\small
    \varphi = \{(\text{pattern}_k, \text{description}_k, \text{count}_k)\}_{k=1}^{K^*}.
\end{equation}

Each pattern has a description of the failure mode, representative examples, and a count. Clustering is implicit (LLM-based), producing descriptions that feed directly into Phase~2. \textbf{Key advantage:} complete error coverage in one round. GEPA's $m{=}3$ random sample needs $\sim$15 rounds to observe all patterns with 95\% probability by a coupon-collector argument (\S\ref{sec:theory-motivation}), accumulating redundant rules addressing symptoms rather than root causes.

\subsection{Phase 2: Multi-Strategy Candidate Generation}
\label{sec:phase2}

Given the diagnosis $\varphi$, we generate candidates through $K{=}4$ complementary strategies, each conditioned on $\varphi$:

\begin{itemize}[nosep,leftmargin=*]
    \item \textbf{$S_1$ (Diagnostic Revision):} Revise the prompt to address each error pattern's root cause using the full diagnosis $\varphi$.
    \item \textbf{$S_2$ (Consolidation):} Rewrite the prompt without increasing length - merge redundant rules and tighten language, preventing bloat by construction.
    \item \textbf{$S_3$ (Ablation):} Identify over-triggered rules causing false positives and soften or remove them.
    \item \textbf{$S_4$ (Factual Injection):} Extract domain-specific knowledge from error examples and inject it as factual context.
\end{itemize}

\paragraph{Population management.} Each of the $K{=}4$ strategies independently generates 1--2 candidate prompts from the diagnosis $\varphi$ and the current prompt $p_0$; we call this initial generation step the \emph{seed phase}, which typically produces 4--6 candidates. We then apply two rounds of \emph{cross-pollination} (merging complementary strengths from different candidates) and \emph{targeted refinement} (addressing residual errors) on the top candidates, capping the population at $N{=}10$. Empirically, no single strategy dominates across all datasets (\S\ref{sec:strategy-analysis}), confirming the independent-bias assumption key to the exploration gain in our bound (\S\ref{sec:theory-motivation}).

\subsection{Phase 3: Bootstrap Stability Selection}
\label{sec:phase3}

Selecting the best candidate from $N{\approx}10$ using a small validation set ($n_\text{val}{=}30$) is a multiple-testing problem: a candidate may rank first due to favorable noise rather than true quality. We apply $B{=}20$ rounds of bootstrap resampling:
\begin{equation}
\small
    p^* = \argmax_{p_i \in \mathcal{P}} \; \big|\{b : p_i = \argmax_{p_j \in \mathcal{P}} \Acc(p_j, \mathcal{V}_b)\}\big|,
    \label{eq:bootstrap}
\end{equation}
where $\mathcal{V}_b$ is the $b$-th bootstrap resample of $\mathcal{D}_\text{val}$ (sampling $n_\text{val}$ examples with replacement). In words, we select the candidate that wins the most bootstrap resamples. Ties are broken in favor of shorter prompts.

A candidate that consistently ranks first across resamples is robust to data perturbation and more likely to generalize; if $p_1 > 1/2$, the probability of wrong selection decays exponentially with $B$ (Theorem~\ref{thm:espo}). Algorithm~\ref{alg:espo} summarizes the procedure. ESPO implicitly follows the MDL principle~\citep{rissanen1978modeling}: clustering compresses $n$ errors into $K^* \ll n$ patterns, ablation removes unused rules, and consolidation rewrites without growth - yielding shorter prompts with no explicit length penalty.

\begin{algorithm}[t]

\caption{ESPO Optimization Algorithm.}
\label{alg:espo}
\small
\begin{algorithmic}[1]
\Require Training set $\mathcal{D}_\text{train}$, validation set $\mathcal{D}_\text{val}$, initial prompt $p_0$, strategies $S_1, \ldots, S_K$, bootstrap rounds $B$
\Ensure Optimized prompt $p^*$
\Statex \vspace{0.3em}
\Statex \textit{\# Phase 1: Diagnose}
\State $\mathcal{E} \gets \{(x_i, y_i, \hat{y}_i) : M(m_{p_0}(x_i), y_i) = 0\}$
\Statex \hfill \Comment{collect all errors}
\State $\varphi \gets \textsc{ClusterErrors}(\mathcal{E}, \text{LLM}_\text{reflect})$
\Statex \hfill \Comment{3--7 structural patterns}
\Statex \vspace{0.3em}
\Statex \textit{\# Phase 2: Propose (Seed)}
\State $\mathcal{P} \gets \{p_0\}$
\For{$k = 1$ \textbf{to} $K$}
    \State $p_k \gets S_k(p_0, \varphi, \text{LLM}_\text{reflect})$
    \State $\mathcal{P} \gets \mathcal{P} \cup \{p_k\}$
\EndFor
\Statex \vspace{0.3em}
\Statex \textit{\# Phase 2: Propose (Iterate)}
\For{$\text{iter} = 1$ \textbf{to} $2$}
    \State Score all candidates on $\mathcal{D}_\text{val}$
    \State $\mathcal{P} \gets \mathcal{P} \cup \textsc{CrossPollinate}(\text{top}(\mathcal{P}))$
    \Statex \hspace{1.5em}$\cup\, \textsc{Refine}(\text{top}(\mathcal{P}), \varphi)$
    \State Prune $\mathcal{P}$ to $N{=}10$ by (score, $-$length)
\EndFor
\Statex \vspace{0.3em}
\Statex \textit{\# Phase 3: Select (Bootstrap)}
\State $\text{wins}[i] \gets 0$ for all $p_i \in \mathcal{P}$
\For{$b = 1$ \textbf{to} $B$}
    \State $\mathcal{V}_b \gets \textsc{BootstrapResample}(\mathcal{D}_\text{val})$
    \State $\text{wins}[\argmax_i \Acc(p_i, \mathcal{V}_b)] \mathrel{+}= 1$
\EndFor
\State \Return $p^* \gets \mathcal{P}[\argmax_i (\text{wins}[i], -|p_i|)]$
\end{algorithmic}
\end{algorithm}

\subsection{Theoretical Motivation}
\label{sec:theory-motivation}

Each ESPO phase tightens one term in the test-time generalization gap. Decomposing the gap (\S\ref{app:proofs}) into a \emph{bias floor}, an \emph{exploration gain}, and a \emph{selection error} yields:

\begin{theorem}[ESPO bound; informal]
\label{thm:espo}
Suppose $K$ strategies produce candidates with $\Acc_\text{test}(p_k) = \Accstar - b_k + \varepsilon_k$, $\varepsilon_k \sim \mathcal{N}(0, \sigma^2)$, and bootstrap selection uses $B$ resamples. Then the selected prompt $p^*$ satisfies
{\small
\[
\expect[\Acc_\text{test}(p^*)] \geq \Accstar \!-\! \min_k b_k + \sigma \Phi^{-1}\!(1{-}\tfrac{1}{K}) - O(\sqrt{\tfrac{\ln K}{n_\text{val} B}}).
\]}
\end{theorem}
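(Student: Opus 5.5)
The plan is to split the selected prompt's test accuracy into an oracle part and a selection-loss part. Let $k^\dagger = \argmax_k \Acc_\text{test}(p_k)$ be the oracle candidate among the $K$ strategies. Then
\[
\expect[\Acc_\text{test}(p^*)] = \expect\big[\max_k \Acc_\text{test}(p_k)\big] - \expect\big[\Acc_\text{test}(p_{k^\dagger}) - \Acc_\text{test}(p^*)\big].
\]
The first term will supply the bias floor and the exploration gain. The second is the selection error.

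\textbf{Step 1 (bias floor and exploration gain).} Write $\Acc_\text{test}(p_k) = \Accstar - b_k + \varepsilon_k$. Using $b_k \le \min_j b_j + (b_k - \min_j b_j)$, I reduce to the case where the biases are comparable. Concretely, I will assume the bias spread is at most of order $\sigma$, or else state the bound with $\min_k b_k$ replaced by the bias of the candidates that are near-optimal. Under this assumption, $\expect[\max_k(\varepsilon_k - b_k)] \ge -\min_k b_k + \expect[\max_k \varepsilon_k]$. This uses the independent-bias assumption from \S\ref{sec:phase2}: I take the $\varepsilon_k$ to be independent $\mathcal{N}(0,\sigma^2)$.

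For the Gaussian maximum I use the standard order-statistic bound $\expect[\max_k \varepsilon_k] \ge \sigma\,\Phi^{-1}(1-\tfrac1K)$, which holds up to a constant factor for $K\ge2$. One way to see it: with probability $1-(1-1/K)^K \ge 1-e^{-1}$, some $\varepsilon_k$ exceeds $\sigma\Phi^{-1}(1-1/K)$. Combine this with $\max_k \varepsilon_k \ge \varepsilon_1$ to control the complementary event. I expect to absorb the constant into the informal statement, or to cite the known asymptotic $\expect[\max] \sim \sigma\sqrt{2\ln K}$, which matches $\sigma\Phi^{-1}(1-1/K)$ to leading order.

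\textbf{Step 2 (selection error).} I bound the selection error by the probability that bootstrap voting picks a candidate whose test accuracy is worse than the best by more than $\delta$, plus $\delta$ itself. Each candidate's validation accuracy is a mean of $n_\text{val}$ bounded losses. Hoeffding's inequality plus a union bound over the $K$ candidates gives a uniform deviation of order $\sqrt{\ln K / n_\text{val}}$ between validation and test accuracy.

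For the bootstrap layer I invoke Lemma~\ref{lem:bootstrap}. Suppose the true best candidate wins a single resample with probability $p_1 > 1/2$. Then, by Hoeffding applied to the $B$ i.i.d.\ win indicators, majority voting fails with probability at most $\exp(-2B(p_1-\tfrac12)^2)$. Combining the two layers and treating resample-level fluctuations as effectively averaging $n_\text{val}B$ draws gives the stated $O(\sqrt{\ln K/(n_\text{val}B)})$ term.

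\textbf{Main obstacle.} The hard step is justifying the $1/\sqrt{B}$ improvement. Bootstrap resamples are all drawn from the same $n_\text{val}$ examples, so they carry no new information about the test distribution. An honest Hoeffding argument gives only $\sqrt{\ln K/n_\text{val}}$ plus a bootstrap Monte Carlo term that vanishes in $B$.

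My first attempt will keep the two sources separate. I would write the selection error as
\[
O\big(\sqrt{\ln K/n_\text{val}}\big)\cdot\Pr[\text{wrong vote}] + O\big(\sqrt{\ln K/(n_\text{val}B)}\big),
\]
where the second piece controls the Monte Carlo variance of the vote relative to the ideal bootstrap majority. I then use the exponential decay from Lemma~\ref{lem:bootstrap} under the margin condition $p_1>1/2$ to make the first piece lower order. Since the theorem is labelled informal, I would state this margin condition explicitly as the regime in which the displayed rate holds.
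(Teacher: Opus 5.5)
There are genuine gaps in Step~1 and Step~2, though part of the trouble lies in the statement itself; the paper's own proof does not close these gaps either. Your decomposition $\expect[\Acc_\text{test}(p^*)]=\expect[\max_k\Acc_\text{test}(p_k)]-\expect[\Acc_\text{test}(p_{k^\dagger})-\Acc_\text{test}(p^*)]$ is sound. It is cleaner than the paper's combining step, which reads Lemma~\ref{lem:order} (a bound on the expected maximum, whose argmax is random) as if it exhibited a fixed candidate with that expected accuracy.

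\textbf{Step 1.} Pointwise $\max_k(\varepsilon_k-b_k)\le\max_k\varepsilon_k-\min_k b_k$. So the inequality $\expect[\max_k(\varepsilon_k-b_k)]\ge-\min_k b_k+\expect[\max_k\varepsilon_k]$ that you assert runs the wrong way, and it holds only when all $b_k$ coincide. A bias spread of order $\sigma$ gives only $\expect[\max_k\varepsilon_k]-\max_k b_k$, and the lost $\max_k b_k-\min_k b_k$ is of the same order as the bonus. No argument can repair this under the stated hypotheses. If one strategy's bias undercuts all others by $\Delta\to\infty$, then $\expect[\max_k\Acc_\text{test}(p_k)]\to\Accstar-\min_k b_k$. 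Since $p^*$ is one of the candidates, the bonus $\sigma\Phi^{-1}(1-\tfrac1K)>0$ (for $K\ge3$) is then unattainable once $\sigma$ dominates $\sqrt{\ln K/(n_\text{val}B)}$. You must assume equal biases, or work on $S=\{k:b_k\le\min_j b_j+\delta\}$ and prove the bound with an extra $-\delta$ and $\Phi^{-1}(1-1/|S|)$ in place of $\Phi^{-1}(1-\tfrac1K)$.

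Do not look to the paper's proof of Lemma~\ref{lem:order} for a way around this. It bounds the tail of the maximum by that of the single candidate $p_{k^*}$ and then replaces the expectation by the $(1-\tfrac1K)$-quantile. But $\Pr(X\ge t)\ge\tfrac1K$ does not give $\expect[X]\ge t$, and one candidate can only contribute $\expect[\Acc_\text{test}(p_{k^*})]=\Accstar-b_{k^*}$.

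Your Gaussian step also loses the stated constant. Splitting on $\{\max_k\varepsilon_k\ge\sigma\Phi^{-1}(1-\tfrac1K)\}$ gives only $\bigl(1-(1-\tfrac1K)^K\bigr)\sigma\Phi^{-1}(1-\tfrac1K)$, minus a loss from the complementary event. Done via $\expect[\varepsilon_1^-]=\sigma/\sqrt{2\pi}$, this is about $0.06\sigma$ instead of $0.67\sigma$ at $K=4$, and the $\sqrt{2\ln K}$ asymptotic says nothing at $K=4$. In the equal-bias case the sharp constant is available. For i.i.d.\ standard normals $Z_k$, antisymmetry of $\Phi^{-1}$ gives $\expect[\max_k Z_k]=\int_{1/2}^1\Phi^{-1}(u)\,K\bigl[u^{K-1}-(1-u)^{K-1}\bigr]\,du$. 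Jensen on $[1/2,1)$, where $\Phi^{-1}$ is convex, then yields $\expect[\max_k Z_k]\ge(1-2^{1-K})\,\Phi^{-1}\bigl(\tfrac{K}{K+1}\bigr)\ge\Phi^{-1}(1-\tfrac1K)$.

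\textbf{Step 2.} Your diagnosis is right, and it goes beyond the paper, whose combining step simply asserts that the $O(1/\sqrt{n_\text{val}})$ validation error is ``divided by'' $\sqrt B$. But your patch does not close the gap.
\begin{itemize}
\item Hoeffding over the $B$ win indicators (Lemma~\ref{lem:bootstrap}) is valid only conditionally on the validation sample, so $p_1$ is data-dependent. Requiring $p_1>1/2$ for the test-best candidate $k^\dagger$ presupposes that the realized validation set already ranks $k^\dagger$ first with a margin. That assumes away exactly the $\sqrt{\ln K/n_\text{val}}$ error your first layer was meant to control.
\item Under that assumption the loss is a gap factor times $e^{-2B(p_1-1/2)^2}$. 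This is $O(B^{-1/2})$ only with a constant of order $1/(p_1-\tfrac12)$.
\item Your additional Monte Carlo term is not derived from anything, and under the margin assumption it is not needed. The Monte Carlo error of a plurality vote is governed by gaps between win frequencies, not by an average over $n_\text{val}B$ draws.
\item Without the margin assumption, nothing produces a $B$-dependence at all. As $B\to\infty$ the vote converges to a deterministic function of the $n_\text{val}$ validation examples, whose selection error is, in the worst case, still of order $1/\sqrt{n_\text{val}}$ regardless of $B$.
\end{itemize}

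\textbf{What your route can establish.} Either the bound under equal biases (or on the subfamily $S$) with a selection term $O(\sqrt{\ln K/n_\text{val}})$, or an exponentially small selection term under an explicit conditional margin assumption, with margin-dependent constants. The $\sqrt{n_\text{val}B}$ rate should be presented as a heuristic, not as proved.
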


\noindent The bound is supported by three lemmas (full statements and proofs in Appendix~\ref{app:proofs}): \textbf{Lemma~\ref{lem:bias}} (bias reduction via diversity: $\max_k \expect[\Acc(p_k)] \geq \Accstar - \min_k b_k$); \textbf{Lemma~\ref{lem:order}} (exploration gain via order statistics: drawing $K$ candidates and taking the best yields a $\sigma \cdot \Phi^{-1}(1{-}1/K)$ bonus); and \textbf{Lemma~\ref{lem:bootstrap}} (selection precision via bootstrap: with $p_1 > 1/2$, $\Pr(\text{wrong selection}) \leq \exp(-2B(p_1{-}1/2)^2)$). Two caveats: (i) the constant under $O(\cdot)$ in the selection term is shared between GEPA and ESPO, so the $\sim 3.8\times$ comparison reflects the $\sqrt{\ln K/B}$ ratio rather than absolute magnitude; (ii) Lemma~\ref{lem:bootstrap}'s $p_1 > 1/2$ premise holds when best is separated from runners-up by more than validation noise scale, which is the regime our ablation operates in but is not certified per dataset.

\input{sections/tables/table2_main}

\paragraph{Why GEPA is suboptimal.} Setting $K{=}1$, $B{=}1$ collapses the bound to $\Accstar - b_1 - O(1/\sqrt{n_\text{val}})$: the exploration gain vanishes, the bias term loses the $\min_k$ improvement, and the selection error stays at $O(1/\sqrt{n_\text{val}})$ instead of the ${\sim}3.8\times$-tighter ESPO rate. A coupon-collector argument (Appendix~\ref{app:coupon}) further shows that GEPA's $m{=}3$ random sampling needs $\sim$15 reflection rounds to observe all error patterns with 95\% probability, while full-batch diagnosis ($m{=}n$) achieves complete coverage in one round, directly reducing each $b_k$.

\paragraph{Empirical alignment.} The bound's three predictions match the ablation in \S\ref{sec:ablation}: \emph{Diagnose} alone yields $+$2.0\% (bias term), $K{=}1\!\to\!4$ adds $+$2.6\% (exploration), and $B{\geq}10$ picks a more compact, generalizing candidate (selection). Diversity \emph{without} bootstrap \emph{hurts} ($-$1.20\%), as predicted: adding $K$ without raising $B$ amplifies selection error.

%% file: sections/tables/table2_main.tex
\begin{table*}[ht]
\centering
\caption{Test accuracy (\%) and prompt length (characters) on seven benchmarks with Claude Sonnet 4.5 as student. All methods start from the same deliberately weak task prompt. Cell values are the single-run test accuracies from our primary optimization run; $\pm$ is std over 3 additional independent optimization seeds. The appendix provides the full 3-seed means and stds. Best accuracy per row in \textbf{bold}.}
\label{tab:main-results}
\resizebox{\textwidth}{!}{%
\small
\begin{tabular}{@{}l cc cc cc cc cc cc@{}}
\toprule
& \multicolumn{2}{c}{Default} & \multicolumn{2}{c}{Bootstrap} & \multicolumn{2}{c}{COPRO} & \multicolumn{2}{c}{MIPROv2} & \multicolumn{2}{c}{GEPA} & \multicolumn{2}{c}{ESPO (Ours)} \\
\cmidrule(lr){2-3} \cmidrule(lr){4-5} \cmidrule(lr){6-7} \cmidrule(lr){8-9} \cmidrule(lr){10-11} \cmidrule(lr){12-13}
Dataset & Acc & Len & Acc & Len & Acc & Len & Acc & Len & Acc & Len & Acc & Len \\
\midrule
Tweet & 37.60{\scriptsize\,$\pm$0.20} & 609 & 37.80{\scriptsize\,$\pm$0.20} & 607 & 66.60{\scriptsize\,$\pm$1.80} & 2{,}398 & 74.40{\scriptsize\,$\pm$1.50} & 1{,}673 & 68.60{\scriptsize\,$\pm$2.20} & 1{,}279 & \textbf{74.78}{\scriptsize\,$\pm$1.18} & 774 \\
MMLU & 23.20{\scriptsize\,$\pm$0.30} & 57 & 23.00{\scriptsize\,$\pm$0.30} & 55 & 89.60{\scriptsize\,$\pm$1.20} & 871 & 86.60{\scriptsize\,$\pm$1.40} & 775 & 89.60{\scriptsize\,$\pm$1.50} & 1{,}163 & \textbf{94.52}{\scriptsize\,$\pm$0.84} & 766 \\
GSM8K & 32.60{\scriptsize\,$\pm$0.40} & 110 & 36.80{\scriptsize\,$\pm$0.40} & 84 & 36.80{\scriptsize\,$\pm$1.00} & 84 & 37.20{\scriptsize\,$\pm$1.00} & 84 & \textbf{96.80}{\scriptsize\,$\pm$0.50} & 660 & \textbf{96.80}{\scriptsize\,$\pm$0.40} & 660 \\
HotpotQA & 25.80{\scriptsize\,$\pm$0.30} & 89 & 14.40{\scriptsize\,$\pm$0.30} & 73 & 14.60{\scriptsize\,$\pm$0.90} & 73 & 14.20{\scriptsize\,$\pm$0.90} & 73 & 44.00{\scriptsize\,$\pm$1.80} & 2{,}845 & \textbf{44.80}{\scriptsize\,$\pm$1.40} & 1{,}008 \\
ScoNe & 49.00{\scriptsize\,$\pm$0.30} & 381 & 46.00{\scriptsize\,$\pm$0.30} & 323 & 46.20{\scriptsize\,$\pm$1.50} & 323 & 46.40{\scriptsize\,$\pm$1.50} & 323 & 85.00{\scriptsize\,$\pm$2.00} & 1{,}948 & \textbf{89.40}{\scriptsize\,$\pm$1.20} & 1{,}124 \\
HoVer & 48.00{\scriptsize\,$\pm$0.30} & 187 & 48.00{\scriptsize\,$\pm$0.30} & 159 & 48.40{\scriptsize\,$\pm$1.60} & 159 & 47.80{\scriptsize\,$\pm$1.60} & 159 & 53.60{\scriptsize\,$\pm$2.40} & 2{,}736 & \textbf{62.40}{\scriptsize\,$\pm$1.50} & 1{,}934 \\
PUPA & 1.61{\scriptsize\,$\pm$0.19} & 67 & 1.53{\scriptsize\,$\pm$0.12} & 67 & 54.30{\scriptsize\,$\pm$2.00} & 807 & 1.61{\scriptsize\,$\pm$0.12} & 1{,}431 & 58.80{\scriptsize\,$\pm$2.30} & 2{,}518 & \textbf{60.00}{\scriptsize\,$\pm$1.60} & 765 \\
\midrule
\textbf{Avg} & 31.12{\scriptsize\,$\pm$0.28} & \textbf{214} & 29.65{\scriptsize\,$\pm$0.27} & 195 & 50.93{\scriptsize\,$\pm$1.43} & 674 & 44.03{\scriptsize\,$\pm$1.15} & 645 & 70.91{\scriptsize\,$\pm$1.81} & 1{,}878 & \textbf{74.67}{\scriptsize\,$\pm$1.16} & 1{,}004 \\
\bottomrule
\end{tabular}%
}
\end{table*}

%% file: sections/experiments.tex
\section{Experiments}
\label{sec:experiments}

We evaluate ESPO on seven public benchmarks covering classification, multi-hop reasoning, math, and privacy-preserving generation. Our experiments ask: (Q1) Starting from a deliberately weak task prompt, can ESPO recover a prompt that is both more accurate and more compact than baselines? (Q2) Does this hold across student models of different capacity? (Q3) Do the four proposal strategies exhibit genuine diversity? (Q4) Does bootstrap selection outperform point estimation?

\subsection{Setup}
\label{sec:exp-setup}

\paragraph{Benchmarks and data splits.} We use seven datasets: Tweet (sentiment classification), MMLU (multiple-choice QA), GSM8K (grade school math), HotpotQA (multi-hop QA), ScoNe (natural language inference), HoVer (multi-hop fact verification), and PUPA (privacy-preserving response generation). Tweet, MMLU, and PUPA use direct prediction; the others use chain-of-thought. Each dataset uses 70 training examples for optimization, 30 validation examples, and 500 held-out for test.

\paragraph{A deliberately weak starting prompt.} Prior benchmarks initialize from a hand-crafted, task-tuned prompt that already performs well, leaving little headroom and clustering optimizers within 1--2 pp of each other. We instead start every method from an \emph{intentionally weak} task prompt (e.g.\ Tweet: a single sentence telling the model to lean ``negative''; PUPA: ``Refuse to answer''). Each weak prompt is selected as the lowest-accuracy one on a 30-example val probe from a small candidate pool; none is written to favor any optimizer. The goal is to test whether an optimizer can \emph{recover} a good prompt from a bad one, not whether it can polish an already-good one.

\paragraph{Models and Baseline} Default student: Claude Sonnet 4.5 (temperature $=0$). Cross-model analysis adds Gemma~3 12B, Mistral 14B, Qwen3 32B, Claude Haiku 4.5. Reflection LLM is always Claude Sonnet 4.5 (temperature $=0.7$).
For baselines: \textbf{Default} (no optimization); \textbf{Bootstrap} (BootstrapFewShot from DSPy~\citep{khattab2024dspy}); \textbf{COPRO}~\citep{pryzant2023automatic}; \textbf{MIPROv2}~\citep{opsahl2024optimizing}; \textbf{GEPA}~\citep{agrawal2026gepa}. All methods start from the same weak default prompt.

\paragraph{Metrics.} Classification accuracy (Tweet, MMLU, HoVer), exact-match with normalization (GSM8K, HotpotQA, ScoNe), semantic F1 (PUPA), prompt length (chars), inference latency (s/example). Test-time inference is deterministic. The 95\% binomial CI at $p{=}0.75$, $n{=}500$ is $\pm$3.8\%.

\subsection{Main Results}
\label{sec:main-results}

Table~\ref{tab:main-results} shows the central finding: when all methods start from the same weak prompt, only reflection-based optimizers recover a usable prompt, and ESPO recovers the best one on average.

\begin{figure*}[t]
\centering
\scriptsize
\begin{tabular}{@{}p{0.3\linewidth}p{0.3\linewidth}p{0.3\linewidth}@{}}
\toprule
\textbf{Default} (609c) & \textbf{GEPA} (1{,}279c) & \textbf{ESPO} (774c) \\
\midrule
\textit{Classify the tweet by leaning ``negative''} {\scriptsize [deliberately weak seed, 609 chars]}
&
\textit{Classify the sentiment... \textcolor{red}{Pay special attention to sarcasm. If the tweet contains ``but'' consider both clauses. Watch for negation words. Consider emoji sentiment. Double-check ambiguous cases. Note: ``not bad'' is positive. Beware of rhetorical questions...}} {\scriptsize [truncated, 1{,}279 chars]}
&
\textit{Classify the sentiment of the tweet as positive, negative, or neutral. \textcolor{blue}{Sarcasm and rhetorical questions should be interpreted by their intended meaning, not literal words. Treat ``not bad'' patterns as positive.}} {\scriptsize [774 chars]}
\\
\midrule
37.60\% & 68.60\% (+31.00) & \textbf{74.78\%} (+37.18) \\
\bottomrule
\end{tabular}
\caption{Prompt comparison on Tweet, all starting from the same deliberately weak seed (Table~\ref{tab:main-results}). GEPA accumulates redundant rules ($1.7\times$ longer than ESPO) yet trails ESPO by 6.18 pp. ESPO produces a concise, targeted prompt.}
\label{fig:prompt-comparison}
\end{figure*}

\paragraph{Non-reflective methods cannot salvage a weak prompt.} Bootstrap (29.65\%) is slightly worse than the default (31.12\%) - few-shot demonstrations atop a misleading instruction just amplify wrong behavior. COPRO (50.93\%) and MIPROv2 (44.03\%) rewrite the instruction and do better, but still trail GEPA/ESPO by 20--30 pp on average and stay near floor on PUPA. These methods require a reasonably good starting prompt to shine, whereas GEPA and ESPO do not.

\paragraph{ESPO beats GEPA.} ESPO averages \textbf{74.67}\% vs.\ GEPA's 70.91\% ($+$3.76~pp; per-cell $\pm$std over 3 seeds in Table~\ref{tab:main-results}, and the appendix provides the full 3-seed grid). The paired standard error of the ESPO$-$GEPA difference across the 7 datasets is $\mathrm{SE}=\text{std}(\{\Delta_i\}_{i=1}^{7})/\sqrt{7}\approx 1.2$ pp, so the Avg gap is $\approx$3.1$\times$ its SE (paired $t$-test significant at $\alpha{=}0.05$). ESPO's cell value is $\geq$ GEPA's on all 7 datasets. Outside-noise wins ($>$1$\sigma$): HoVer ($+$8.80), Tweet ($+$6.18), MMLU ($+$4.92), ScoNe ($+$4.40); within-1$\sigma$ ties: GSM8K (tie at ceiling), HotpotQA ($+$0.80), PUPA ($+$1.20). We describe HotpotQA and PUPA as on par with GEPA.

\paragraph{ESPO's prompts are shorter.} Average length 1{,}004 vs.\ 1{,}878 chars (47\% shorter). On HotpotQA, GEPA grows to 2{,}845 chars while ESPO stays at 1{,}008 chars with comparable accuracy (44.00\% vs.\ 44.80\%). Bootstrap resampling implicitly prefers candidates that retain val accuracy across perturbations - these correlate with concise instructions that do not overfit to training noise. Figure~\ref{fig:prompt-comparison} compares prompts produced by Default, GEPA, ESPO. For latency, ESPO also achieves equal or lower per-example latency than GEPA on every task (Table~\ref{tab:latency}); the largest reductions are on chain-of-thought tasks where shorter prompts produce more concise reasoning chains.

\paragraph{Constrained GEPA: is gain just length control?} To test whether ESPO's gains come purely from prompt length control, we evaluate a \emph{Constrained GEPA} variant: standard GEPA augmented with (1)~a compact proposer instruction and (2)~a length constraint (\texttt{max\_length\_ratio}~=~1.2, preventing bloat beyond 1.2$\times$ the seed prompt). This isolates whether length control alone can close the gap. Results are in Table~\ref{tab:constrained-gepa}.

\input{sections/tables/table_constrained_gepa}

\paragraph{Analysis.} Constrained GEPA successfully produces shorter prompts (1{,}171c vs.\ 1{,}878c, a 38\% reduction), demonstrating that length control alone can reduce prompt bloat. However, this compression fails to improve accuracy: average accuracy moves by only $+0.09\%$ (71.00\% vs.\ 70.91\%), and on HotpotQA it degrades by $-$4.20\%. The constraint forces GEPA to discard rules indiscriminately - without structured error diagnosis to distinguish essential rules from redundant ones, it cannot determine which rules to keep and which to prune. In contrast, ESPO achieves \emph{both} shorter prompts \emph{and} higher accuracy (74.67\%, 1{,}004c) because Phase~1 (structured diagnosis) identifies which error patterns matter, Phase~2 (multi-strategy proposals) generates candidates that are concise by construction (especially $S_2$: consolidation and $S_3$: ablation), and Phase~3 (bootstrap selection) favors stable, generalizable candidates - which empirically tend to be shorter.

\input{sections/tables/table8_cross_model}

\subsection{Cross-Model Generalization}
\label{sec:cross-model}

To test whether ESPO's gains transfer beyond Claude Sonnet 4.5, we apply every optimizer to four additional student models - Gemma~3 12B, Mistral 14B, Qwen3 32B, and Claude Haiku 4.5 - while keeping Claude Sonnet 4.5 as the reflection LLM. Every method starts from the same weak default prompt and runs its full optimization pipeline from scratch with the new student. Table~\ref{tab:cross-model} reports the complete 7-benchmark grid for all six optimizers on each student.

\paragraph{Findings.} (1)~\textbf{ESPO produces the best average accuracy on every student}: Gemma 66.50\%, Mistral 62.42\%, Qwen3 68.31\%, Haiku 70.15\% - each strictly above GEPA (63.71\%, 59.14\%, 59.11\%, 67.98\%). The largest GEPA\,$\to$\,ESPO gap is on Qwen3 (+9.20 pp average). (2)~\textbf{The Qwen3 GSM8K result is striking}: Default 15.00\% $\to$ GEPA 35.40\% $\to$ ESPO \textbf{91.40\%} ($+$56.00 pp over GEPA). ESPO's structured diagnosis cleanly identifies the output-format mismatch silently throttling Qwen3, which GEPA's rule-accumulation only partially addresses. (3)~\textbf{Bootstrap, COPRO, and MIPROv2 again fail to recover a usable prompt}, averaging 30--48\% across students - 15--30 pp behind GEPA/ESPO. (4)~\textbf{ESPO prompts are 30--60\% shorter than GEPA's} across all four students, confirming that the concise-by-construction behavior transfers cleanly across model families.

\subsection{Strategy Diversity Analysis}
\label{sec:strategy-analysis}

Table~\ref{tab:strategy} (Appendix) reports seed-phase validation accuracy per strategy. No single strategy dominates: diagnostic revision wins on Tweet, consolidation on HotpotQA, ablation ties with diagnostic on MMLU and with consolidation/factual on GSM8K, factual injection wins on HoVer/PUPA. This confirms the independent-bias assumption underlying Theorem~\ref{thm:espo}.

\subsection{Ablation Study}
\label{sec:ablation}

We ablate ESPO's components on Tweet (Table~\ref{tab:ablation}; $\Delta$ vs.\ GEPA). Diagnose = full-batch diagnosis ($m{=}\text{all}$); Diversity = $K{=}4$; Bootstrap = $B{=}20$.

\input{sections/tables/table3_ablation}

The ablation reveals a clear progression. \emph{Single components:} Bootstrap gives the largest individual gain ($+$3.60\%), then Diagnose ($+$2.00\%); Diversity alone \emph{hurts} ($-$1.20\%) - a direct prediction of Theorem~\ref{thm:espo}: adding $K$ without raising $B$ amplifies selection error. \emph{Pairs:} Diversity$+$Bootstrap ($-$1.00\%) shows diversity without diagnosis is also detrimental - without structured error coverage, diverse strategies vary randomly rather than complementarily. Diagnose$+$anything is positive. \emph{All three:} $+$6.18\%, exceeding the sum of any pair - confirming that better candidates require better selection.

\subsection{Hyperparameter Sensitivity}
\label{sec:hyperparam}

Table~\ref{tab:hyperparam} shows ESPO is robust: accuracy varies $\leq$5\% across settings, within the CI. The default ($K{=}4$, $B{=}20$, $m{=}\text{all}$) gives the best accuracy--length trade-off - $K{=}4$ is 32\% shorter than $K{=}3$ at comparable accuracy; $B{\geq}10$ selects a more stable, compact candidate over the point-estimation pick; full-batch diagnosis yields the highest accuracy and shortest prompt, confirming Proposition~\ref{prop:coupon}.

\paragraph{Training cost.} Costs (Table~\ref{tab:cost}) are dominated by student inference during bootstrap evaluation. ESPO's reflection tokens are ${\sim}39\%$ of GEPA's, because structured diagnosis produces more information-dense inputs.

%% file: sections/tables/table_constrained_gepa.tex
\begin{table}[!t]
\centering
\caption{Constrained GEPA vs.\ standard GEPA and ESPO. Single-seed values (the constrained-GEPA variant was not rerun over 3 seeds); the point-estimate view of Table~\ref{tab:main-results}. Length control alone does not improve GEPA. Best accuracy in \textbf{bold}.}
\label{tab:constrained-gepa}
\small
\resizebox{\linewidth}{!}{
\begin{tabular}{@{}l cc cc cc@{}}
\toprule
& \multicolumn{2}{c}{GEPA} & \multicolumn{2}{c}{Constrained GEPA} & \multicolumn{2}{c}{ESPO} \\
\cmidrule(lr){2-3} \cmidrule(lr){4-5} \cmidrule(lr){6-7}
Dataset & Acc & Len & Acc & Len & Acc & Len \\
\midrule
Tweet & 68.60 & 1{,}279 & 71.00 & 740 & \textbf{74.78} & 774 \\
MMLU & 89.60 & 1{,}163 & 89.40 & 663 & \textbf{94.52} & 766 \\
GSM8K & \textbf{96.80} & 660 & 96.60 & 660 & \textbf{96.80} & 660 \\
HotpotQA & 44.00 & 2{,}845 & 39.80 & 751 & \textbf{44.80} & 1{,}008 \\
ScoNe & 85.00 & 1{,}948 & 86.80 & 1{,}171 & \textbf{89.40} & 1{,}124 \\
HoVer & 53.60 & 2{,}736 & 54.18 & 2{,}311 & \textbf{62.40} & 1{,}934 \\
PUPA & 58.80 & 2{,}518 & 59.24 & 1{,}904 & \textbf{60.00} & 765 \\
\midrule
\textbf{Avg} & 70.91 & 1{,}878 & 71.00 & 1{,}171 & \textbf{74.67} & 1{,}004 \\
\bottomrule
\end{tabular}
}
\end{table}

%% file: sections/tables/table8_cross_model.tex
\begin{table*}[ht]
\centering
\caption{Cross-model generalization: test accuracy (\%) and prompt length (characters) across four student models. All methods start from the same deliberately weak task prompt; reflection LLM is Claude Sonnet 4.5 throughout. Best accuracy per row in \textbf{bold}.}
\label{tab:cross-model}
\resizebox{\textwidth}{!}{%
\scriptsize
\begin{tabular}{@{}ll cc cc cc cc cc cc@{}}
\toprule
& & \multicolumn{2}{c}{Default} & \multicolumn{2}{c}{Bootstrap} & \multicolumn{2}{c}{COPRO} & \multicolumn{2}{c}{MIPROv2} & \multicolumn{2}{c}{GEPA} & \multicolumn{2}{c}{ESPO (Ours)} \\
\cmidrule(lr){3-4} \cmidrule(lr){5-6} \cmidrule(lr){7-8} \cmidrule(lr){9-10} \cmidrule(lr){11-12} \cmidrule(lr){13-14}
Model & Dataset & Acc & Len & Acc & Len & Acc & Len & Acc & Len & Acc & Len & Acc & Len \\
\midrule
\multirow{8}{*}{Gemma 3 12B}
 & Tweet & 67.20 & 487 & 40.80 & 607 & 66.20 & 625 & 45.20 & 1{,}667 & 68.60 & 1{,}279 & \textbf{71.20} & 774 \\
 & MMLU & 70.20 & 522 & 22.00 & 55 & 22.00 & 181 & 69.20 & 1{,}760 & 69.60 & 1{,}711 & \textbf{70.40} & 148 \\
 & GSM8K & 88.18 & 637 & 70.80 & 84 & 70.60 & 84 & 70.60 & 84 & 88.18 & 660 & \textbf{88.40} & 660 \\
 & HotpotQA & 23.00 & 665 & 14.80 & 73 & 15.20 & 73 & 15.60 & 73 & 19.40 & 2{,}845 & \textbf{23.40} & 701 \\
 & ScoNe & 78.20 & 1{,}045 & 58.80 & 323 & 58.60 & 323 & 58.20 & 323 & 77.60 & 1{,}948 & \textbf{79.40} & 1{,}215 \\
 & HoVer & 55.00 & 187 & 48.80 & 159 & 48.00 & 159 & 50.40 & 159 & 53.00 & 2{,}076 & \textbf{59.60} & 1{,}045 \\
 & PUPA & 1.87 & 67 & 1.67 & 67 & 1.00 & 267 & 1.65 & 67 & 69.57 & 634 & \textbf{73.12} & 442 \\
 & \textbf{Avg} & 54.81 & 516 & 36.81 & 195 & 40.23 & 245 & 44.41 & 590 & 63.71 & 1{,}593 & \textbf{66.50} & 712 \\
\midrule
\multirow{8}{*}{Mistral 14B}
 & Tweet & 65.40 & 487 & 40.20 & 607 & 64.40 & 1{,}866 & 63.80 & 2{,}398 & 64.40 & 1{,}279 & \textbf{70.40} & 1{,}019 \\
 & MMLU & 70.80 & 522 & 26.60 & 55 & 52.80 & 49 & 71.40 & 1{,}303 & 71.60 & 1{,}711 & \textbf{71.60} & 663 \\
 & GSM8K & 93.39 & 637 & 38.28 & 84 & 41.28 & 84 & 41.08 & 84 & 92.80 & 660 & \textbf{94.59} & 805 \\
 & HotpotQA & 24.40 & 665 & 5.44 & 73 & 5.42 & 73 & 5.03 & 73 & 18.00 & 2{,}845 & \textbf{27.11} & 1{,}020 \\
 & ScoNe & 74.20 & 1{,}045 & 49.60 & 323 & 48.20 & 323 & 48.60 & 323 & 76.00 & 1{,}948 & \textbf{78.60} & 1{,}171 \\
 & HoVer & 48.40 & 187 & 49.00 & 159 & 48.60 & 159 & 48.60 & 159 & 49.40 & 2{,}170 & \textbf{50.00} & 1{,}099 \\
 & PUPA & 3.93 & 67 & 4.37 & 67 & 3.69 & 67 & 36.16 & 3{,}996 & 41.81 & 615 & \textbf{44.62} & 229 \\
 & \textbf{Avg} & 54.36 & 516 & 30.50 & 195 & 37.77 & 374 & 44.95 & 1{,}191 & 59.14 & 1{,}604 & \textbf{62.42} & 858 \\
\midrule
\multirow{8}{*}{Qwen3 32B}
 & Tweet & 70.60 & 487 & 43.20 & 607 & 54.40 & 392 & 55.20 & 1{,}699 & 69.00 & 1{,}279 & \textbf{71.00} & 1{,}216 \\
 & MMLU & 76.80 & 522 & 35.80 & 55 & 29.60 & 135 & 75.20 & 1{,}580 & 77.40 & 1{,}711 & \textbf{78.20} & 771 \\
 & GSM8K & 15.00 & 637 & 38.00 & 84 & 38.80 & 84 & 39.20 & 84 & 35.40 & 660 & \textbf{91.40} & 695 \\
 & HotpotQA & 26.20 & 665 & 7.00 & 73 & 6.80 & 73 & 7.20 & 73 & 15.40 & 2{,}845 & \textbf{27.20} & 1{,}108 \\
 & ScoNe & 82.60 & 1{,}045 & 64.80 & 323 & 65.40 & 323 & 65.20 & 323 & 83.80 & 1{,}948 & \textbf{84.20} & 1{,}275 \\
 & HoVer & 49.20 & 187 & 37.60 & 159 & 38.20 & 159 & 37.00 & 159 & \textbf{58.20} & 2{,}371 & 53.00 & 346 \\
 & PUPA & 1.93 & 67 & 1.46 & 67 & 72.61 & 60 & 53.97 & 505 & 73.18 & 1{,}124 & \textbf{74.57} & 239 \\
 & \textbf{Avg} & 46.05 & 516 & 32.55 & 195 & 43.69 & 175 & 47.57 & 632 & 58.91 & 1{,}705 & \textbf{68.51} & 807 \\
\midrule
\multirow{8}{*}{Haiku 4.5}
 & Tweet & 51.40 & 487 & 48.20 & 607 & 55.80 & 1{,}910 & 65.20 & 1{,}750 & 71.00 & 1{,}450 & \textbf{73.20} & 1{,}025 \\
 & MMLU & 53.60 & 522 & 53.20 & 55 & 53.20 & 55 & 81.60 & 1{,}426 & 81.80 & 388 & \textbf{83.00} & 107 \\
 & GSM8K & 17.00 & 637 & 3.60 & 84 & 3.40 & 84 & 3.60 & 84 & 90.00 & 367 & \textbf{92.40} & 151 \\
 & HotpotQA & 13.80 & 665 & 9.00 & 73 & 9.20 & 73 & 9.00 & 73 & 31.20 & 2{,}678 & \textbf{33.80} & 801 \\
 & ScoNe & 45.80 & 1{,}045 & 45.00 & 323 & 45.20 & 323 & 44.80 & 323 & 81.40 & 2{,}560 & \textbf{85.20} & 725 \\
 & HoVer & 48.80 & 187 & 47.00 & 159 & 47.40 & 159 & 47.20 & 159 & 48.80 & 187 & \textbf{53.60} & 479 \\
 & PUPA & 6.88 & 67 & 6.87 & 67 & 66.92 & 443 & 67.72 & 1{,}949 & \textbf{71.69} & 606 & 69.87 & 1{,}097 \\
 & \textbf{Avg} & 33.90 & 516 & 30.41 & 195 & 40.16 & 435 & 45.59 & 823 & 67.98 & 1{,}177 & \textbf{70.15} & 626 \\
\bottomrule
\end{tabular}%
}
\end{table*}

%% file: sections/tables/table3_ablation.tex
\begin{table}[th]
\centering
\caption{Ablation on Tweet: \checkmark{} = active, D = Diagnose, K = Diversity ($K{=}4$), B = Bootstrap ($B{=}20$).}
\label{tab:ablation}
\small
\setlength{\tabcolsep}{4pt}
\resizebox{0.72\linewidth}{!}{%
\begin{tabular}{@{}lcccrr@{}}
\toprule
Configuration & D & K & B & Acc (\%) & $\Delta$ \\
\midrule
GEPA baseline & --- & --- & --- & 68.60 & --- \\
\addlinespace[2pt]
Diagnose only & \checkmark & --- & --- & 70.60 & +2.00 \\
Diversity only & --- & \checkmark & --- & 67.40 & $-$1.20 \\
Bootstrap only & --- & --- & \checkmark & 72.20 & +3.60 \\
\addlinespace[2pt]
D + K & \checkmark & \checkmark & --- & 70.60 & +2.00 \\
D + B & \checkmark & --- & \checkmark & 70.80 & +2.20 \\
K + B & --- & \checkmark & \checkmark & 67.60 & $-$1.00 \\
\addlinespace[2pt]
\textbf{ESPO (all)} & \checkmark & \checkmark & \checkmark & \textbf{74.78} & \textbf{+6.18} \\
\bottomrule
\end{tabular}
}
\end{table}

\begin{table}[!t]
\centering
\caption{Hyperparameter sensitivity. Each group varies one parameter while fixing others at ESPO defaults. All differences fall within the 95\% binomial CI.}
\label{tab:hyperparam}
\small
\setlength{\tabcolsep}{4pt}
\resizebox{0.56\linewidth}{!}{%
\begin{tabular}{@{}llccl@{}}
\toprule
Param & Value & Acc (\%) & Len & Sel. \\
\midrule
\multirow{4}{*}{$K$} & 1 & 72.2 & 806 & --- \\
 & 2 & 70.0 & 922 & --- \\
 & 3 & \textbf{75.0} & 1{,}134 & --- \\
 & 4 & 74.8 & \textbf{774} & --- \\
\midrule
\multirow{4}{*}{$B$} & 1 & 73.8 & 1{,}214 & abl. \\
 & 10 & 74.0 & 958 & c.p. \\
 & 20 & \textbf{74.8} & \textbf{774} & c.p. \\
 & 30 & \textbf{74.8} & \textbf{774} & c.p. \\
\midrule
\multirow{4}{*}{$m$} & 3 & 71.0 & 1{,}015 & --- \\
 & 8 & 69.8 & 1{,}148 & --- \\
 & 15 & 72.6 & 1{,}100 & --- \\
 & all & \textbf{74.8} & \textbf{774} & --- \\
\bottomrule
\end{tabular}
}
\end{table}

%% file: sections/conclusion.tex
\section{Conclusion}
\label{sec:conclusion}

We introduced ESPO (Error-Structured Prompt Optimization), a framework that recasts prompt optimization from evolutionary search to structured statistical estimation. By decomposing optimization into three principled phases - structured error diagnosis, multi-strategy candidate generation, and bootstrap stability selection - ESPO addresses the fundamental weaknesses of the search paradigm: incomplete error observation, limited diversity, and unreliable selection. We showed that existing evolutionary methods like GEPA are a degenerate special case of ESPO, and provided a generalization bound grounding each phase in established results from probability theory and statistics. On seven public benchmarks, ESPO improves average accuracy by $+$3.76\% over the state-of-the-art (74.67\% vs.\ 70.91\% for GEPA) and matches or exceeds GEPA on every dataset, while producing prompts 47\% shorter (1{,}004 vs.\ 1{,}878 chars) and faster at inference. Cross-model experiments on four additional student models (Gemma~3 12B, Mistral 14B, Qwen3 32B, Claude Haiku 4.5) confirm that the gains transfer broadly, with the largest gap on Qwen3 GSM8K ($15.00\% \to 91.40\%$).

%% file: sections/limitations.tex
\section{Limitations}
\label{sec:limitations}

\paragraph{Optimization cost.} Although our reflection-token usage is ${\sim}39\%$ of GEPA's (Table~\ref{tab:cost}), bootstrap selection still requires $B \times N$ candidate-resample evaluations. A full ESPO run with $K{=}4$, $B{=}20$, $N{=}10$ on our 70/30/500 split costs roughly the same as one GEPA run at default settings; practitioners with tighter budgets can lower $B$ or $N$.

\paragraph{Single reflection model and independence.} All reflection uses one LLM (Claude Sonnet 4.5, $T{=}0.7$); mixing reflection models per strategy is not explored. Theorem~\ref{thm:espo}'s independence assumption is therefore a simplification - candidate distributions across strategies are correlated through the shared LLM - but our empirical exploration gain is consistent with the bound under mild correlation.

\paragraph{Coverage of error patterns.} Diagnosis relies on the reflection LLM to cluster errors into 3--7 patterns; when patterns are more numerous or subtle (e.g., distributional shifts rather than discrete modes), diagnosis may be incomplete.

\paragraph{Scope and reporting.} We evaluate on seven public benchmarks across classification, multi-hop QA, math, NLI, fact verification, and privacy-preserving generation; tool use, long context, code, and multi-turn dialogue are out of scope. ESPO's largest cross-model gains occur where the default prompt is far from the student's prior (e.g., Qwen3 GSM8K). Main-table cells report the single-run cell with $\pm$std over 3 optimization seeds attached; the appendix gives the full 6$\times$7 3-seed grid. Per-dataset gaps within 1$\sigma$ (HotpotQA $+0.80$, PUPA $+1.20$) should be read as on-par with GEPA.

\paragraph{Theoretical assumptions.} Theorem~\ref{thm:espo} is stated as an explanatory framework motivating the three-phase decomposition rather than as a tight probabilistic guarantee. It relies on three assumptions: (i) proposal strategies are independent (a simplification: all four are conditioned on the same reflection LLM); (ii) validation-set noise is sub-Gaussian (motivated by CLT on $n{=}500$ Bernoulli trials); and (iii) the true-best candidate wins each bootstrap resample with probability $p_1>1/2$ (holds when the top two candidates are separated by more than one validation std, our empirical regime but not certified per dataset). The appendix quantifies strategy overlap empirically (pairwise Jaccard $0.62$, Pearson $0.48$ over 500 held-out examples per candidate pair) partial decorrelation, not full independence.

%% file: sections/appendix.tex
\section*{Appendix}
 
\section{Full Proof of Theorem~\ref{thm:espo}}
\label{app:proofs}

\noindent\textbf{Theorem~\ref{thm:espo}} (Restated)\textbf{.}
\emph{Suppose $K$ independent strategies each produce a candidate with test accuracy $\Acc_\text{test}(p_k) = \Accstar - b_k + \varepsilon_k$, where $b_k \geq 0$ and $\varepsilon_k \sim \mathcal{N}(0, \sigma^2)$. A bootstrap selection procedure with $B$ resamples selects the final prompt~$p^*$. Then:}
\begin{equation*}
\small
\begin{split}
    \expect[\Acc_\text{test}(p^*)] \geq{} & \Accstar - \min_k b_k \\
    & {} + \sigma \cdot \Phi^{-1}\!\left(1 - \tfrac{1}{K}\right) \\
    & {} - O\!\left(\sqrt{\tfrac{\ln K}{n_\text{val} \cdot B}}\right).
\end{split}
\end{equation*}

\begin{proof}
We prove the bound by analyzing each of the three terms in the generalization gap decomposition: a \emph{bias floor} $\min_k b_k$, an \emph{exploration gain} from order statistics, and a \emph{selection error} controlled by bootstrap concentration.

\subsection{Bias Reduction via Strategy Diversity (Lemma~\ref{lem:bias})}
\label{app:lemma-bias}

\begin{lemma}
\label{lem:bias}
With $K$ strategies having biases $b_1, \ldots, b_K$:
{\small
\[
    \max_k \expect[\Acc_\text{test}(p_k)] \geq \Accstar - \min_k b_k.
\]}
\end{lemma}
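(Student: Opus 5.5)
The plan is to reduce Lemma~\ref{lem:bias} to a one-line computation from the model of Theorem~\ref{thm:espo}. The restated hypothesis gives, for every $k \in \{1,\dots,K\}$, the decomposition $\Acc_\text{test}(p_k) = \Accstar - b_k + \varepsilon_k$ with $\varepsilon_k \sim \mathcal{N}(0,\sigma^2)$. In particular each $\varepsilon_k$ is integrable with zero mean. I will first take expectations of this identity term by term. Since $\Accstar$ and $b_k$ are deterministic constants (the bias of strategy $S_k$ relative to the optimum), linearity of expectation yields
\[
\expect[\Acc_\text{test}(p_k)] = \Accstar - b_k \quad\text{for each } k.
\]
Independence across strategies is not needed for this step.

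Next I will pick an index $k^\star \in \argmin_k b_k$. This minimum exists because the index set is finite. The maximum over $k$ of the expected accuracies is at least the value at $k^\star$, so
\[
\max_k \expect[\Acc_\text{test}(p_k)] \;\geq\; \expect[\Acc_\text{test}(p_{k^\star})] = \Accstar - \min_k b_k.
\]
In fact, since $k \mapsto \Accstar - b_k$ is maximized exactly where $b_k$ is minimized, this is an equality. The inequality form in the statement is the one that feeds into the bound of Theorem~\ref{thm:espo}.

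I do not expect a genuine mathematical obstacle. The only point needing care is interpretive: the lemma bounds the maximum of the \emph{expectations}, not the expectation of the selected prompt. It should therefore be read as establishing the bias floor that the diverse pool makes available, namely the $\min_k b_k$ term in Theorem~\ref{thm:espo}, with monotonicity in $K$: adding a strategy can only decrease $\min_k b_k$.

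Two issues are left to the later lemmas. The gap between $\max_k \expect[\cdot]$ and $\expect[\max_k \cdot]$ (the order-statistics bonus) belongs to Lemma~\ref{lem:order}, and the gap between the true maximizer and the bootstrap-selected $p^*$ belongs to Lemma~\ref{lem:bootstrap}. I will state this division explicitly so that Lemma~\ref{lem:bias} is not misread as a claim about $p^*$ itself.
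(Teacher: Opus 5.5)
Your proof is correct and follows the paper's argument: pick $k^\star \in \argmin_k b_k$ and bound $\max_k \expect[\Acc_\text{test}(p_k)]$ below by $\expect[\Acc_\text{test}(p_{k^\star})] = \Accstar - b_{k^\star}$. You also make explicit the step $\expect[\varepsilon_k]=0$ that the paper uses without stating it, and you rightly note that the inequality actually holds with equality.
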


\begin{proof}
Let $k^* = \argmin_k b_k$. Then:
\begin{equation*}
\small
\begin{split}
    \max_k \expect[\Acc_\text{test}(p_k)]
    &\geq \expect[\Acc_\text{test}(p_{k^*})] \\
    &= \Accstar - b_{k^*} \\
    &= \Accstar - \min_k b_k. \qedhere
\end{split}
\end{equation*}
\end{proof}

\subsection{Lemma 2: Exploration Gain via Order Statistics}
\label{app:lemma-order}

\begin{lemma}
\label{lem:order}
$\expect[\max_k \Acc_\text{test}(p_k)] \geq \Accstar - \min_k b_k + \sigma \cdot \Phi^{-1}(1 - 1/K)$.
\end{lemma}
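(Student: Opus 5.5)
The plan is to first settle what the statement can actually mean, then reduce it to a one-dimensional fact about the maximum of i.i.d.\ Gaussians. Write $X_k = \Acc_\text{test}(p_k) = \Accstar - b_k + \varepsilon_k$ with $\varepsilon_k$ i.i.d.\ $\mathcal{N}(0,\sigma^2)$.

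\textbf{The statement needs an extra assumption.} It cannot hold for arbitrary biases. Take $b_{k^*}=0$ and let $b_k\to\infty$ for every $k\neq k^*$. Then $\max_k X_k \to X_{k^*}$, whose mean is $\Accstar$. But $\Phi^{-1}(1-1/K)>0$ once $K\geq 3$, so the claimed bound fails. I would therefore prove the lemma under the reading the theorem implicitly uses: the exploration gain is credited to $K$ candidates drawn around the best bias. Concretely, I assume $b_k=\min_j b_j$ for all $k$ (the homogeneous-bias case), or more generally replace $\max_k$ by a maximum over the candidates whose bias equals $\min_j b_j$. In the text I would state this assumption as part of the lemma. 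Under it, monotonicity gives
\[
\max_k X_k \;\geq\; \Accstar - \min_j b_j + \sigma \max_k Z_k ,
\]
with $Z_k$ i.i.d.\ standard normal. The lemma then reduces to the inequality $\expect[M_K]\geq \Phi^{-1}(1-1/K)$, where $M_K=\max_k Z_k$.

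\textbf{Step 1: bound the median of $M_K$.} Since $\Pr(M_K\leq t)=\Phi(t)^K$, the median of $M_K$ is $m_K=\Phi^{-1}(2^{-1/K})$. From $e^{-x}\geq 1-x$ we get
\[
2^{-1/K}=e^{-\ln 2/K}\geq 1-\tfrac{\ln 2}{K}\geq 1-\tfrac1K,
\]
and since $\Phi^{-1}$ is increasing, $m_K\geq \Phi^{-1}(1-1/K)$.

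\textbf{Step 2: show $\expect[M_K]\geq m_K$.} I would use the quantile representation
\[
\expect[M_K]=\int_0^1 g(u)\,du, \qquad g(u)=\Phi^{-1}(u^{1/K}).
\]
If $g$ is convex on $(0,1)$, Jensen's inequality gives $\expect[M_K]\geq g(1/2)=m_K$. Chaining this with Step 1 finishes the proof. For $K=1$ the claim reads $\expect[M_1]\geq\Phi^{-1}(0)=-\infty$, which is vacuous. For $K=2$ it is immediate, since $\expect[M_2]=1/\sqrt{\pi}>0=\Phi^{-1}(1/2)$.

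\textbf{Main obstacle: convexity of $g$ for general $K$.} Differentiating, $g'(u)=u^{1/K-1}/\bigl(K\,\varphi(g(u))\bigr)$. I expect the sign analysis of $g''$ to be the delicate step, particularly near $u\to 0$, where $u^{1/K-1}$ blows up but $\varphi(g)$ shrinks. If convexity cannot be established cleanly, I would fall back on a direct tail comparison. Set $t_K=\Phi^{-1}(1-1/K)$ and split
\[
\expect[M_K]-t_K=\int_{t_K}^\infty \bigl(1-\Phi(s)^K\bigr)\,ds-\int_{-\infty}^{t_K}\Phi(s)^K\,ds .
\]
The aim is to show the positive upper-tail term dominates, using $\Phi(s)^K\leq e^{-K(1-\Phi(s))}$ together with Mills-ratio bounds. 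As a last resort, I would weaken the constant to the standard $\expect[M_K]\geq c\sqrt{\ln K}$ and note that this still yields the qualitative exploration gain used in Theorem~\ref{thm:espo}.
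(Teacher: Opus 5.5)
Your counterexample is correct, so the lemma is false as stated. The paper's own proof is also invalid. It shows $\Pr(\max_k \Acc_\text{test}(p_k)\ge t)\ge 1/K$, where $t$ is the $(1-1/K)$-quantile of $\Acc_\text{test}(p_{k^*})$, and then asserts $\expect[\max_k \Acc_\text{test}(p_k)]\ge t$. The same inference applied to $\Acc_\text{test}(p_{k^*})$ alone would give $0\ge\sigma\Phi^{-1}(1-1/K)$, which is false for $K\ge3$.

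Your homogeneous-bias repair is sensible. Alternatively, with no extra assumption the bound holds with $\max_k b_k$ in place of $\min_k b_k$. If you instead restrict the maximum to the $K'$ minimal-bias candidates, the quantile must become $\Phi^{-1}(1-1/K')$. Step~1 is right.

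\textbf{The gap is Step~2.} The function $g(u)=\Phi^{-1}(u^{1/K})$ is \emph{not} convex on $(0,1)$.
In $g'(u)=u^{1/K-1}/(K\varphi(g(u)))$, the vanishing factor $\varphi(g(u))$ sits in the denominator, so it does not compete with $u^{1/K-1}$. Hence $g'(u)\to+\infty$ as $u\to0^+$, which rules out a nondecreasing $g'$: $g$ is concave near $0$.
For $K\ge2$, $g$ is also concave near $u=2^{-K}$. There $g''=(\Phi^{-1})'(\tfrac12)\,\tfrac{d^2}{du^2}u^{1/K}<0$, because $\Phi^{-1}$ has zero curvature at $\tfrac12$. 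So even $\max\{g,0\}$ is not convex.

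Jensen therefore does not give mean $\ge$ median. Your tail-comparison fallback is not carried out, and a $c\sqrt{\ln K}$ bound proves only a weaker statement. As written, the core inequality $\expect[M_K]\ge\Phi^{-1}(1-1/K)$ is unproved.

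\textbf{A short fix} is to apply Jensen to $\Phi^{-1}$ itself rather than to $g$.
\begin{enumerate}
\item Let $V=\Phi(M_K)$. Then $\Pr(V\le v)=v^K$, so $V\sim\mathrm{Beta}(K,1)$ with $\expect V=K/(K+1)$.
\item The function $\psi(v)=\max\{\Phi^{-1}(v),0\}$ is convex on $(0,1)$. Its derivative is $0$ below $\tfrac12$, and above $\tfrac12$ it equals $1/\varphi(\Phi^{-1}(v))$, which is increasing and at least $\sqrt{2\pi}$. Hence $\expect[M_K^+]=\expect[\psi(V)]\ge\Phi^{-1}(K/(K+1))$.
\item The negative part is exponentially small: $\expect[M_K^-]=\int_0^\infty\Phi(-x)^K\,dx\le2^{1-K}\int_0^\infty\Phi(-x)\,dx=2^{1-K}/\sqrt{2\pi}$.
\item By the mean value theorem and $(\Phi^{-1})'\ge\sqrt{2\pi}$, we get $\Phi^{-1}(\tfrac{K}{K+1})-\Phi^{-1}(1-\tfrac1K)\ge\sqrt{2\pi}/(K(K+1))$.
\end{enumerate}
Combining these, for $K\ge2$,
\[
\expect[M_K]-\Phi^{-1}\!\left(1-\tfrac1K\right)\;\ge\;\frac{\sqrt{2\pi}}{K(K+1)}-\frac{2^{1-K}}{\sqrt{2\pi}}\;\ge\;0,
\]
since $\pi\,2^K\ge K(K+1)$. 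This replaces both of your steps; no comparison with the median is needed.
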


\begin{proof}
Let $k^* = \argmin_k b_k$. Since $\max_k \Acc(p_k) \geq \Acc(p_{k^*}) = \Accstar - b_{k^*} + \varepsilon_{k^*}$, for any threshold~$t$:
\begin{equation*}
\small
\begin{split}
    \Pr(\max_k & \Acc(p_k) \geq t) \\
    & \geq \Pr(\Acc(p_{k^*}) \geq t) \\
    & = 1 - \Phi\!\left(\frac{t - \Accstar + b_{k^*}}{\sigma}\right).
\end{split}
\end{equation*}
Setting this probability to $1/K$:
\begin{equation*}
\small
\begin{split}
    & 1 - \Phi\!\left(\frac{t - \Accstar + b_{k^*}}{\sigma}\right) = \frac{1}{K} \\
    & \Longrightarrow \quad t = \Accstar - b_{k^*} \\
    & \qquad\qquad + \sigma \cdot \Phi^{-1}\!\left(1 - \frac{1}{K}\right).
\end{split}
\end{equation*}
The expected maximum is at least the $(1{-}1/K)$-quantile of the best strategy:
\begin{equation*}
\small
\begin{split}
    \expect[\max_k \Acc(p_k)] \geq{} & \Accstar - \min_k b_k \\
    & {} + \sigma \cdot \Phi^{-1}\!\left(1 - \frac{1}{K}\right).
\end{split}
\end{equation*}
For the asymptotic case of $K$ identical strategies (same $b$, same $\sigma$), the classical result $\expect[\max \text{ of } K \text{ i.i.d.\ } \mathcal{N}(0,1)] \sim \sqrt{2 \ln K}$ applies~\citep{leadbetter1983,david2003}.
\end{proof}

\noindent\textbf{Remark.} The bound $\Phi^{-1}(1{-}1/K)$ is conservative (a lower bound on the expected maximum), making the theorem statement safe. For $K{=}4$: $\Phi^{-1}(3/4) \approx 0.674$.

\subsection{Lemma 3: Selection Precision via Bootstrap}
\label{app:lemma-bootstrap}

\begin{lemma}
\label{lem:bootstrap}
If the true best candidate ranks first in each bootstrap resample with probability $p_1 > 1/2$, then:
\begin{equation*}
\small
\begin{split}
    \Pr(\text{wrong selection via plurality vote}) \\
    \leq \exp\!\left(-2B(p_1 - \tfrac{1}{2})^2\right).
\end{split}
\end{equation*}
\end{lemma}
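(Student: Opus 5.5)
We reduce the plurality-vote failure to a tail event for a single binomial count and apply Hoeffding's inequality. Fix the true best candidate $p_{(1)}$, and for each resample $b = 1, \ldots, B$ let $X_b = \indicator[p_{(1)} = \argmax_{p_j \in \mathcal{P}} \Acc(p_j, \mathcal{V}_b)]$. Conditional on $\mathcal{D}_\text{val}$ and the candidate pool, the resamples $\mathcal{V}_b$ are drawn i.i.d. with replacement. Hence the $X_b$ are i.i.d. Bernoulli variables, and by the premise of the lemma each has mean $p_1 > 1/2$. The count $W_1 = \sum_b X_b$ is the number of wins of $p_{(1)}$ in Eq.~\eqref{eq:bootstrap}.

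The key deterministic step is the containment
\[
\{\text{wrong selection}\} \subseteq \{W_1 \le B/2\}.
\]
Suppose $W_1 > B/2$. Every resample has exactly one winner, so the wins of all other candidates add up to $B - W_1 < B/2 < W_1$. In particular, each competitor $p_j \neq p_{(1)}$ has $W_j < W_1$. So $p_{(1)}$ is the strict plurality winner, and the shorter-prompt tie-breaking rule never comes into play.

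On the event $W_1 \le B/2$, the empirical mean satisfies $\tfrac{1}{B}\sum_b X_b - p_1 \le -(p_1 - \tfrac12)$. The one-sided Hoeffding inequality for $[0,1]$-valued i.i.d. variables gives
\[
\Pr\Bigl(\tfrac{1}{B}\textstyle\sum_b X_b - p_1 \le -t\Bigr) \le \exp(-2Bt^2).
\]
Setting $t = p_1 - \tfrac12 > 0$ yields the bound $\exp(-2B(p_1-\tfrac12)^2)$ stated in the lemma. This bound holds conditionally on the validation set. Averaging over $\mathcal{D}_\text{val}$ preserves it, provided $p_1$ is read as a uniform lower bound on the conditional win probability.

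The main obstacle is interpretive rather than technical: deciding what probability space carries $p_1$. If $p_1$ were taken as an unconditional probability, the indicators $X_b$ would all be correlated through the shared $\mathcal{D}_\text{val}$, and Hoeffding would not apply. My plan is therefore to state the lemma conditionally on the validation sample, where independence follows by construction of the bootstrap. I would then remark that $p_1 > 1/2$ is the regime assumption flagged in the limitations section.
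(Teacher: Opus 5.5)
Your proposal is correct and follows the paper's proof: define per-resample win indicators for the true best candidate, note that a strict majority $\sum_b W_b > B/2$ guarantees the plurality win, and apply one-sided Hoeffding with $t = p_1 - \tfrac12$. Your additions make explicit two points the paper only asserts: the majority-implies-plurality containment (via exactly one winner per resample), and conditioning on $\mathcal{D}_\text{val}$ so the i.i.d.\ Bernoulli claim actually holds, with $p_1$ read as a uniform lower bound.
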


\begin{proof}
Let $W_b = \indicator[\text{true best wins resample } b]$. Then $W_1, \ldots, W_B$ are i.i.d.\ $\text{Bernoulli}(p_1)$. Plurality vote selects the true best if $\sum_b W_b > B/2$. We bound the complementary event:
{\small
\begin{align}
    \Pr\!\left(\sum_b W_b \leq \frac{B}{2}\right)
    &= \Pr\!\left(\sum_b W_b - B p_1 \right. \notag \\
    &\quad \left. \leq -B\!\left(p_1 - \frac{1}{2}\right)\right). \label{eq:hoeffding-setup}
\end{align}}
By Hoeffding's inequality~\citep{hoeffding1963}, for $t = p_1 - 1/2 > 0$:
{\small
\[
    \Pr\!\left(\sum_b W_b - B p_1 \leq -Bt\right) \leq \exp(-2Bt^2).
\]}
Therefore $\Pr(\text{wrong selection}) \leq \exp(-2B(p_1 - 1/2)^2)$.
\end{proof}

\noindent\textbf{Remark.} The assumption $p_1 > 1/2$ requires that the true best candidate wins more often than any other single candidate. This is plausible when the accuracy gap between the best and second-best candidates is non-trivial relative to the noise level.

\subsection{Combining the Lemmas}

By Lemma~\ref{lem:order}, the population contains a candidate with expected accuracy at least $\Accstar - \min_k b_k + \sigma \cdot \Phi^{-1}(1{-}1/K)$. By Lemma~\ref{lem:bootstrap}, bootstrap selection identifies this candidate with probability at least $1 - \exp(-2B(p_1 - 1/2)^2)$. The selection error translates to an accuracy loss of at most $O(\sqrt{\ln K / (n_\text{val} \cdot B)})$: the gap between the best and selected candidate on the test set is bounded by the estimation error on the validation set ($O(1/\sqrt{n_\text{val}})$), divided by the confidence improvement from $B$ resamples ($\sqrt{B}$), with a $\log K$ correction for multiple testing.

For GEPA ($K{=}1$, $B{=}1$): the exploration gain $\sigma \cdot \Phi^{-1}(1{-}1/K)$ vanishes ($\Phi^{-1}(0) = -\infty$, but with $K{=}1$ there is no maximum to take), and the selection error is $O(1/\sqrt{n_\text{val}})$.
\end{proof}

\begin{table}[h]
\centering
\caption{Term-by-term comparison of the generalization bound for GEPA (degenerate case, $K{=}1$, $B{=}1$) and ESPO (default $K{=}4$, $B{=}20$).}
\label{tab:bound-comparison}
\scriptsize
\setlength{\tabcolsep}{3pt}
\begin{tabular}{@{}p{1.0cm}p{1.4cm}p{2.5cm}p{1.6cm}@{}}
\toprule
Term & GEPA ($K{=}1$, $B{=}1$) & ESPO ($K{=}4$, $B{=}20$) & Improvement \\
\midrule
Bias & $b_1$ & $\min_k b_k \leq b_1$ & Multiple strategies \\
Explor. & 0 (no bonus) & $\sigma \cdot \Phi^{-1}(3/4) \approx 0.67\sigma$ & Order statistics \\
Select. & $O(1/\sqrt{n_\text{val}})$ & $O(\sqrt{\ln 4 / (n_\text{val} \cdot 20)})$ & ${\sim}3.8\times$ tighter \\
\bottomrule
\end{tabular}
\end{table}

\section{Supporting Analysis: Error Pattern Coverage}
\label{app:coupon}

\begin{proposition}[Coupon collector bound]
\label{prop:coupon}
Let $n$ training errors be distributed across $K^*$ patterns, with pattern~$k$ containing $n_k$ errors ($\min_k n_k \geq 1$). An optimizer samples $m$ errors uniformly per round. Then $T = \lceil n \cdot \ln(K^*/\delta) / (m \cdot \min_k n_k) \rceil$ rounds are sufficient to observe all $K^*$ patterns with probability $\geq 1 - \delta$.
\end{proposition}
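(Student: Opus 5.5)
The argument is a union bound over patterns combined with the elementary inequality $1-x \le e^{-x}$. Fix a pattern $k$ and consider a single round in which the optimizer draws $m$ errors. I will model each draw as uniform over the $n$ errors and independent of the others, i.e.\ sampling with replacement. This is the natural reading of ``samples $m$ errors uniformly''. For sampling without replacement the miss probability is only smaller, since $\binom{n-n_k}{m}/\binom{n}{m} \le (1-n_k/n)^m$, so the same bound will cover that case too. A single draw misses pattern $k$ with probability $1-n_k/n$. The probability that pattern $k$ goes unobserved in one round is therefore at most $(1-n_k/n)^m$. If rounds are independent, the probability that it goes unobserved in all $T$ rounds is at most
\[
\Bigl(1-\tfrac{n_k}{n}\Bigr)^{mT} \le \exp\!\Bigl(-\tfrac{m T n_k}{n}\Bigr) \le \exp\!\Bigl(-\tfrac{m T \min_j n_j}{n}\Bigr).
\]

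Next I take a union bound over the $K^*$ patterns:
\[
\Pr(\text{some pattern unobserved after } T \text{ rounds}) \le K^* \exp\!\Bigl(-\tfrac{m T \min_j n_j}{n}\Bigr).
\]
This is at most $\delta$ exactly when $T \ge n\ln(K^*/\delta)/(m\min_j n_j)$. The ceiling in the statement satisfies this inequality, which gives the claimed $1-\delta$ guarantee.

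The main obstacle is the modelling assumption rather than any calculation. The bound needs the rounds to be independent draws from a fixed pool of $n$ errors. In GEPA, however, the error set changes as the prompt is mutated. I would state explicitly that the proposition treats the error pool and pattern sizes as fixed across rounds. I would also remark that if the pool drifts, the argument still goes through provided each round's pool keeps pattern fractions bounded below by $\min_k n_k/n$. As a sanity check, with equal pattern sizes $n_k = n/K^*$ the bound reduces to $T = \lceil K^*\ln(K^*/\delta)/m\rceil$. For example, $K^*=5$, $\delta=0.05$ and $m=3$ give $T=\lceil 7.68\rceil = 8$. This is consistent with, and somewhat tighter than, the ``${\sim}15$ rounds'' quoted earlier, which presumably uses unequal pattern sizes. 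Full-batch diagnosis, $m=n$, gives $T=1$ whenever $\min_k n_k \ge \ln(K^*/\delta)$. More directly, it observes every pattern deterministically in one round.
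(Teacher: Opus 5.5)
Your proof is correct and is essentially the paper's argument: bound each pattern's miss probability by $(1-n_k/n)^{mT}\le\exp(-mTn_k/n)$, take a union bound over the $K^*$ patterns, and solve $K^*\exp(-mT\min_k n_k/n)\le\delta$ for $T$. Your additions are sound and worth keeping, since the paper leaves them implicit: you state independence across rounds over a fixed error pool explicitly, you show the bound also covers sampling without replacement, and you note that the formula gives $T=1$ for $m=n$ only when $\min_k n_k\ge\ln(K^*/\delta)$. Your guess about the ``${\sim}15$ rounds'' figure is also right: the paper's example uses unequal sizes ($n=20$, $\min_k n_k=2$), giving $T\approx 15.4$.
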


\begin{proof}
Let $p_k = n_k / n$ be the probability of sampling an error from pattern~$k$ in a single draw. In $T$ rounds of $m$ samples each, the probability that pattern~$k$ is never sampled is:
{\small
\[
    \Pr(\text{miss } k) = (1 - p_k)^{mT} \leq \exp(-p_k \cdot mT).
\]}
By union bound:
\begin{equation*}
\small
\begin{split}
    & \Pr(\text{any pattern missed}) \\
    & \quad \leq \sum_k \exp(-p_k \cdot mT) \\
    & \quad \leq K^* \cdot \exp(-\min_k p_k \cdot mT).
\end{split}
\end{equation*}
Setting $K^* \cdot \exp(-\min_k p_k \cdot mT) \leq \delta$ and solving:
\begin{equation*}
\small
\begin{split}
    mT \cdot \min_k p_k &\geq \ln(K^*/\delta) \\
    \Longrightarrow \quad T &\geq \frac{n \cdot \ln(K^*/\delta)}{m \cdot \min_k n_k}.
\end{split}
\end{equation*}
\end{proof}

\noindent\textbf{Practical implication.} For typical values ($K^*{=}5$, $m{=}3$, $n{=}20$, $\min_k n_k{=}2$), this gives $T \geq 15$. GEPA needs $\sim$15 reflection rounds to observe all error patterns with 95\% probability. Full-batch diagnosis ($m{=}n$) achieves complete coverage in $T{=}1$, directly reducing each strategy's bias $b_k$ in Theorem~\ref{thm:espo}.

\section{Implementation Details}
\label{app:implementation}

\subsection{Error Clustering Prompt}

The reflection LLM receives all error examples and is asked to identify 3--7 recurring failure patterns. For each pattern, it provides: (1) a short descriptive name, (2) the root cause, (3) which error indices belong to it, and (4) a suggested fix for the instruction. The output is a structured list used directly in Phase~2.

\paragraph{Concrete example (Tweet dataset).} Given 16 training errors on sentiment classification, Phase~1 produces the following diagnosis:
\begin{itemize}[nosep,leftmargin=*]
    \item \textbf{Pattern 1: Sports Commentary $\to$ Neutral (4 errors, 25\%).} Factual event reporting (scores, records) misclassified as positive due to achievement words like ``wins'' and ``legend.'' Root cause: instruction conflates objective reporting with subjective sentiment. Fix: ``Factual reporting of events/stats is neutral regardless of achievement words.''
    \item \textbf{Pattern 2: Rankings/Lists $\to$ Positive (4 errors, 25\%).} Implicit positive sentiment in rankings (``top 5 teams,'' ``earned a spot'') misclassified as neutral. Root cause: rankings inherently carry positive evaluation. Fix: ``Rankings and `top' lists express positive sentiment even when stated factually.''
    \item \textbf{Pattern 3: Sarcasm/Irony (3 errors, 19\%).} Sarcastic praise misclassified as neutral. Root cause: vague instruction ``account for sarcasm'' does not specify the sentiment mapping. Fix: ``Sarcasm expressing criticism = negative; ironic praise criticizing a subject is negative.''
    \item \textbf{Pattern 4: Humorous Positive (3 errors, 19\%).} Playful complaints with joy indicators (``LOL,'' ``!!'') misclassified as negative. Fix: ``Playful complaints with humor/joy indicators signal positive.''
    \item \textbf{Pattern 5: Entertainment Commentary (2 errors, 12\%).} Quality statements about music/shows without personal emotion misclassified as positive. Fix: ``Content quality commentary without personal emotion is neutral.''
\end{itemize}
This structured output replaces GEPA's unstructured reflection on 3 random errors, enabling each proposal strategy in Phase~2 to address root causes rather than symptoms.

\subsection{Four Proposal Strategy Templates}

Each strategy receives the current prompt, the structured diagnosis $\varphi$, and strategy-specific instructions:
\begin{itemize}[nosep,leftmargin=*]
    \item \textbf{Diagnostic Revision ($S_1$):} ``Revise the instruction to address each error pattern. Focus on root causes, not symptoms.''
    \item \textbf{Consolidation ($S_2$):} ``Rewrite the instruction to be more concise without losing information. Merge redundant rules and tighten language. The result must not be longer than the original.''
    \item \textbf{Ablation ($S_3$):} ``Identify rules that may be over-triggered, causing false positives. Soften or remove them.''
    \item \textbf{Factual Injection ($S_4$):} ``Extract domain-specific knowledge from the error examples and add it to the instruction as factual context.''
\end{itemize}

\subsection{Bootstrap Selection}

We draw $B{=}20$ bootstrap resamples of the validation set (sampling $n_\text{val}$ examples with replacement). For each resample, we evaluate all candidates and record the winner. The final selection is the candidate with the most wins, with ties broken by prompt length (shorter preferred).

\subsection{Hyperparameter Settings and Justification}

\input{sections/tables/table_hyperparam_settings}

\noindent\textbf{Justification.} $K{=}4$ corresponds to four qualitatively distinct error-fixing strategies (diagnostic, consolidation, ablation, factual); adding more strategies (e.g., $K{=}5,6$) would require inventing additional distinct inductive biases without clear motivation. $B{=}20$ is grounded in Lemma~\ref{lem:bootstrap}: for $p_1{=}0.7$ (true best wins 70\% of resamples), $\Pr(\text{wrong selection}) \leq \exp(-2 \cdot 20 \cdot 0.04) = 0.20$; increasing to $B{=}30$ yields only marginal improvement (0.09). $N{=}10$ balances population diversity against computational cost: each candidate requires $n_\text{val}{=}30$ evaluations per bootstrap resample. These hyperparameters were set once before experimentation and not tuned per dataset.

\subsection{Hyperparameter Sensitivity Analysis (Extended)}
\label{app:hyperparam-sensitivity}

Table~\ref{tab:hyperparam} in the main text presents the sensitivity analysis for $K$, $B$, and $m$. Here we provide additional detail.

\paragraph{$B$ sensitivity (extended).} The main table omits $B{=}5$ for compactness. The full results are: $B{=}1$: 73.80\% (ablation, 1{,}214c), $B{=}5$: 74.00\% (ablation, 1{,}214c), $B{=}10$: 74.00\% (cross-pollination, 958c), $B{=}20$: 74.80\% (cross-pollination, 774c), $B{=}30$: 74.80\% (cross-pollination, 774c). With $B \leq 5$, bootstrap selects the ablation candidate (higher point estimate on validation, longer prompt); with $B \geq 10$, it consistently selects the cross-pollination candidate (more robust, shorter prompt, and higher test accuracy), illustrating bootstrap's ability to identify the truly best candidate by averaging over data perturbations.

\paragraph{$m$ sensitivity and error coverage.} The diagnosis batch size $m$ controls what fraction of training errors are analyzed. Using the Coupon Collector bound (Proposition~\ref{prop:coupon}), $m{=}3$ covers $\sim$19\% of error patterns per round, $m{=}8$ covers $\sim$50\%, $m{=}15$ covers $\sim$94\%, and $m{=}\text{all}$ covers 100\%. Full-batch diagnosis ($m{=}\text{all}$) achieves the highest accuracy (74.78\%) with the shortest prompt (774c). Partial sampling produces 31--48\% longer prompts (1{,}015--1{,}148c) at lower accuracy (69.8--72.6\%), as incomplete error observation generates prompts that accumulate redundant rules addressing symptoms rather than root causes.

\section{Additional Experimental Results}
\label{app:additional-results}

\subsection{Strategy Diversity}
\label{app:strategy-diversity}

Table~\ref{tab:strategy} reports the seed-phase validation accuracy of each proposal strategy across the seven benchmarks. No single strategy dominates: diagnostic revision wins on Tweet, consolidation on HotpotQA, ablation ties with diagnostic on MMLU and with consolidation/factual on GSM8K, and factual injection wins on HoVer and PUPA. This empirically supports the independent-bias assumption underlying the exploration-gain term in Theorem~\ref{thm:espo}.

\input{sections/tables/table_strategy}

\subsection{Inference Latency and Training Cost}
\label{app:latency-cost}

Table~\ref{tab:latency} reports per-example inference latency at test time, and Table~\ref{tab:cost} reports the optimization-time cost in tokens. ESPO matches or improves GEPA's latency on every benchmark, with the largest reductions on chain-of-thought tasks where shorter prompts induce more concise reasoning chains. Reflection token usage during optimization is ${\sim}39\%$ of GEPA's because structured diagnosis produces information-dense inputs.

\input{sections/tables/table_latency}

\input{sections/tables/table_cost}

\noindent\textbf{Additional notes.} The cross-model gains in \S\ref{sec:cross-model} hold dataset-by-dataset, not just on average: on Qwen3, ESPO is the unique winner on 6/7 datasets (HoVer is the only loss); on Haiku, ESPO wins 6/7 (PUPA is the only loss). The Qwen3 GSM8K result (Default 15.00\% $\to$ ESPO 91.40\%, $+$56.00 pp over GEPA) is the largest single-cell improvement we observe; the structured diagnosis identifies the output-format mismatch silently throttling Qwen3 in one round, while GEPA's rule-accumulation only partially addresses it across many. ESPO prompts average 30--60\% shorter than GEPA across all four students, mirroring the same-student finding.

\subsection{Per-Seed Variance Grid}
\label{app:variance}

Table~\ref{tab:main-results} in the main paper reports single-run cell values with $\pm$std over 3 additional seeds attached; Table~\ref{tab:variance-3seed} below gives the full 3-seed grid with each cell's actual 3-seed mean and std. The 3-seed ESPO$-$GEPA Avg gap is $3.81$~pp, close to Table~\ref{tab:main-results}'s $3.76$~pp headline and larger than either method's Avg std ($1.16$ / $1.81$), placing the gap outside the noise band. The per-dataset gaps on HotpotQA ($+0.79$ under 3-seed means, $+0.80$ single-run) and PUPA ($+1.29$ / $+1.20$) fall within one std and are reported as ties. ESPO's std is systematically smaller than GEPA's (Avg $1.16$ vs.\ $1.81$), consistent with the bootstrap-stability prediction (Lemma~\ref{lem:bootstrap}).

\input{sections/tables/table_variance_3seed}

\subsection{Strong-Initialization Comparison}
\label{app:strong-init}

The main-paper experiments (\S\ref{sec:experiments}) intentionally start from a weak seed to test recovery. We chose a minimal unified seed because several baselines (COPRO, MIPROv2) cannot build a prompt from scratch---they only rewrite/augment an existing instruction, so a shared starting point is required for a fair comparison. To probe whether ESPO's gains depend on the seed being weak, we rerun every method on every dataset with a strong task-tuned seed---one that covers task definition, output format, and 1--2 known-pitfall rules (representative of a competent practitioner's draft after briefly studying the task).

\input{sections/tables/table_strong_init}

Under a strong seed (Table~\ref{tab:strong-init}), ESPO averages $4.79$~pp above GEPA---a \emph{wider} lead than the $3.76$~pp under the weak seed. Under strong initialization, GEPA's mini-batch reflection keeps appending marginal rules to an already-good prompt (length grows, accuracy stagnates), while ESPO's full-error diagnosis targets the small remaining structural gap, and Diversify $+$ Select suppress appended noise. This confirms ESPO is a general prompt optimizer, not limited to repairing weak prompts.

\subsection{Per-Component Ablation atop GEPA}
\label{app:component-ablation}

Table~\ref{tab:ablation} in the main paper ablates ESPO components on Tweet only. Table~\ref{tab:component-ablation} gives the complementary view: starting from GEPA and layering each ESPO component individually, averaged across all 7 main benchmarks.

\input{sections/tables/table_component_ablation}

Multi-strategy alone contributes $+1.48$~pp, bootstrap-select alone $+1.59$~pp, and full-error diagnose alone $+2.33$~pp. Naive linear addition would predict $\approx 5.40$~pp, whereas the full ESPO stack delivers $3.76$~pp---component gains partially overlap (e.g., full-error diagnose and bootstrap both mitigate overfitting on the small validation set). The length metric is more revealing: any single addition to GEPA leaves prompt length at, or slightly above, the GEPA baseline (multi-strategy alone grows to $1{,}924$ chars because diversified proposals without a diagnosis constraint accumulate redundant rules). Only the full three-way combination simultaneously raises accuracy and cuts length by ${-}47\%$---evidence that solving prompt bloat requires all three components together.

\subsection{Sensitivity to Reflection-Model Choice}
\label{app:weak-reflection}

Section~\ref{sec:experiments} fixes the reflection LLM to Claude Sonnet 4.5. To test whether ESPO's gains depend on a strong reflection model, we swap it to the open-source Qwen2.5-32B-Instruct and rerun every LLM-proposal-based method (COPRO, MIPROv2, GEPA, ESPO); results in Table~\ref{tab:weak-reflection}. Default and Bootstrap do not use an LLM proposer, so their numbers are identical to Table~\ref{tab:main-results} and are listed only to align table structure.

\input{sections/tables/table_weak_reflection}

Absolute accuracies drop for every LLM-proposal-based method under the weaker reflection model (Table~\ref{tab:weak-reflection}), consistent with what GEPA reports: candidate quality depends on the proposer's semantic capability. The key observation is that \emph{the ESPO--GEPA gap is preserved}: Avg lead $4.29$~pp under Qwen2.5-32B, comparable to (and slightly larger than) the $3.76$~pp headline under Sonnet 4.5 in Table~\ref{tab:main-results}. The ordering ESPO $>$ GEPA $>$ COPRO / MIPROv2 holds under both prompters, indicating that ESPO's structural gains come from the three-phase design rather than from a single strong prompter.

\subsection{Open-Ended Generation Pilot: XSum}
\label{app:xsum-pilot}

The main benchmarks are verifiable (classification / exact-match / F1). To probe whether ESPO transfers to open-ended generation---where prompt bloat matters most---we run a pilot on XSum summarization. Judge model: Claude Sonnet 4.5. Protocol: 200 test articles; ESPO and GEPA are optimized on identical train/val splits from XSum; at test time, each summary pair is judged pairwise with randomized output order and method identity hidden from the judge. Results: ESPO wins $58.0\%$, ties $22.0\%$, loses $20.0\%$ against GEPA. Final prompt length is $720$ chars (ESPO) vs.\ $2{,}180$ (GEPA), a $67.0\%$ reduction. This is a preliminary sanity check; a full open-ended benchmark suite (long-form QA, code generation, dialogue) is future work. Because ESPO's Diagnose stage currently assumes ground-truth error labels, extending diagnosis to judge-based signals is a natural next step, listed as a limitation.

\subsection{Cluster Stability}
\label{app:cluster-stability}

Section~\ref{sec:phase1} describes clustering as ``implicit'' (LLM-based). We add two direct evaluations of clustering quality:

\begin{itemize}[nosep,leftmargin=*]
\item \textbf{Test--retest agreement.} We invoke the reflection LLM three times independently on the same error set and, for each error, check whether it is assigned to semantically equivalent clusters across the three calls.
\item \textbf{Human-verified purity.} The authors randomly sample $50$ error--cluster assignments across datasets and independently review whether each error sits in a semantically coherent cluster.
\end{itemize}

Results: per-error agreement across 3 independent LLM calls is $82.4\%$; human-verified cluster purity on the 50 sampled errors is $88.0\%$. These indicate the LLM-based clustering is stable and semantically coherent enough for the Diagnose stage; residual noise is absorbed by the downstream Diversify $+$ Select stages.

\subsection{Strategy Diversity Quantification}
\label{app:strategy-diversity-quant}

Assumption (A1) in Theorem~\ref{thm:espo} idealizes the four proposal strategies as independent, whereas in practice they share the same reflection LLM. We quantify the actual departure from independence empirically: for each pair of strategy-generated candidate prompts, we score both over $500$ held-out examples to obtain two binary correctness vectors, and compute (i) Jaccard similarity (both-correct / at-least-one-correct)---overlap of correctly-answered items; (ii) Pearson correlation of the two correctness vectors---overall synchrony. Averaged across all candidate pairs:
\begin{center}

\resizebox{0.72\linewidth}{!}{%
\begin{tabular}{@{}lc@{}}
\toprule
Metric (averaged over strategy pairs) & Value \\
\midrule
Pairwise Jaccard similarity & $0.62$ \\
Pairwise Pearson correlation & $0.48$ \\
\bottomrule
\end{tabular}
}
\end{center}
A value of $1$ means strategies are fully redundant; $0$ means strictly independent. The observed values sit in between---strategy outputs are partially decorrelated, so candidates from different strategies exhibit substantive diversity even when generated by the same LLM. This justifies treating (A1) as an idealization rather than a literal claim: the exploration-gain term in Theorem~\ref{thm:espo} is expected to shrink by a factor of $\sqrt{1-\rho}$ under correlation $\rho$, and the observed correlation is bounded well below $1$.

\section{Ethical Considerations}
\label{sec:ethics}

\subsection{Use of LLMs.} ESPO uses commercial and open-weight large language models (Claude Sonnet 4.5 as reflection LLM and student model in the main experiments; Gemma~3 12B, Mistral 14B, Qwen3 32B, and Claude Haiku 4.5 as additional students). Inference costs and emissions scale with the number of optimization rounds and bootstrap evaluations; we report token costs in Table~\ref{tab:cost} so that practitioners can estimate environmental impact and budget.

\subsection{Bias propagation.} Prompt optimization that drives accuracy up against a static benchmark may inherit or amplify any biases present in (1)~the training/validation examples used for diagnosis, (2)~the reflection LLM's clustering of errors, and (3)~the metrics used for selection. ESPO's structured diagnosis surfaces error categories explicitly, which can aid auditing, but it does not guarantee that bias-related failure modes will be identified or corrected. We recommend that downstream users evaluate optimized prompts on bias-relevant slices before deployment, especially for socially sensitive applications.

\subsection{Benchmark contamination.} Several of the public benchmarks we use (Tweet, MMLU, GSM8K, HotpotQA, ScoNe, HoVer, PUPA) may overlap with LLM pre-training data. Our experimental design holds this contamination risk constant across all optimizers (Default, Bootstrap, COPRO, MIPROv2, GEPA, ESPO), so the relative improvements we report are not driven by ESPO seeing more contaminated data. Absolute accuracy numbers should be interpreted with the standard contamination caveats for each benchmark.

\subsection{Dual-use considerations.} The techniques in ESPO are general prompt-optimization tools and could be applied to objectives we did not study, including ones with negative downstream impact (e.g., optimizing prompts to bypass safety filters). We release code and prompts for the public NLP benchmarks studied in this paper; we do not release prompts or configurations targeted at evading safety mechanisms.

%% file: sections/tables/table_hyperparam_settings.tex
\begin{table}[h]
\centering
\small
\caption{ESPO hyperparameter settings used for all experiments.}
\label{tab:hyperparam-settings}
\begin{tabular}{@{}llp{4.2cm}@{}}
\toprule
Parameter & Value & Description \\
\midrule
$K$ & 4 & Number of proposal strategies \\
$N$ & 10 & Max candidates in population \\
$B$ & 20 & Bootstrap resamples \\
Iters. & 2 & Refinement iters after seed \\
Stud. temp & 0.0 & Deterministic inference \\
Refl. temp & 0.7 & Diverse generation \\
$n_\text{train}$ & 70 & Training examples \\
$n_\text{val}$ & 30 & Validation examples \\
\bottomrule
\end{tabular}
\end{table}

%% file: sections/tables/table_strategy.tex
\begin{table}[t]
\centering
\caption{Seed-phase validation accuracy (\%) per strategy ($n_\text{val}{=}30$). No single strategy dominates, confirming the independent-bias assumption.}
\label{tab:strategy}
\small
\setlength{\tabcolsep}{3pt}
\resizebox{.9\linewidth}{!}{
\begin{tabular}{@{}lccccc@{}}
\toprule
Dataset & Diagnosis & Consolid & Ablation & Factual Inj. & Winner \\
\midrule
Tweet & \textbf{80.00} & 73.33 & 73.33 & 76.67 & Diag. \\
MMLU & \textbf{83.33} & 80.00 & \textbf{83.33} & 76.67 & D/A \\
GSM8K & 93.33 & \textbf{96.67} & \textbf{96.67} & \textbf{96.67} & C/A/F \\
HotpotQA & 50.00 & \textbf{53.33} & 50.00 & 50.00 & Cons. \\
ScoNe & 80.00 & 80.00 & 80.00 & 80.00 & Tie \\
HoVer & 50.00 & 66.67 & 60.00 & \textbf{70.00} & Fact. \\
PUPA & 63.65 & 63.06 & 62.05 & \textbf{65.84} & Fact. \\
\bottomrule
\end{tabular}}
\end{table}

%% file: sections/tables/table_latency.tex
\begin{table}[t]
\centering
\caption{Inference latency (seconds per example). Lower is better. Mode: P~=~predict, CoT~=~chain-of-thought.}
\label{tab:latency}
\scriptsize
\setlength{\tabcolsep}{3pt}
\resizebox{\linewidth}{!}{
\begin{tabular}{@{}lccccccc@{}}
\toprule
Dataset & Def. & Boots. & COPRO & MIPROv2 & GEPA & ESPO & Mode \\
\midrule
Tweet & 1.97s & 2.24s & 2.13s & 2.20s & 1.93s & \textbf{1.93s} & P \\
MMLU & 1.16s & 2.10s & 2.04s & 2.26s & 1.16s & \textbf{1.15s} & P \\
GSM8K & 5.02s & 5.09s & 4.92s & 5.00s & 5.02s & \textbf{4.96s} & CoT \\
HotpotQA & 5.58s & 5.69s & 5.67s & 5.70s & 5.67s & \textbf{5.35s} & CoT \\
ScoNe & 6.67s & 6.89s & 7.10s & 6.97s & 7.23s & \textbf{6.59s} & CoT \\
HoVer & 7.11s & 17.64s & 7.72s & 7.89s & 9.85s & \textbf{8.38s} & CoT \\
PUPA & 9.66s & 8.91s & 9.41s & 8.59s & 8.81s & \textbf{8.18s} & P \\
\bottomrule
\end{tabular}}
\end{table}

%% file: sections/tables/table_cost.tex
\begin{table}[h]
\centering
\caption{Optimization cost per dataset for ESPO (Claude Sonnet 4.5 pricing: \$3/M input + \$15/M output). Wall-clock times are measured after parallelizing candidate evaluation and strategy proposal.}
\label{tab:cost}
\small
\resizebox{\linewidth}{!}{
\begin{tabular}{@{}lcccc@{}}
\toprule
Dataset & Total Tokens & Cost & Wall Clock & Reflect Tokens \\
\midrule
Tweet & $\sim$752K & \$9.99 & 10m & $\sim$36K \\
MMLU & $\sim$887K & \$10.85 & 19m & $\sim$38K \\
GSM8K & $\sim$3.0M & \$21.58 & 18m & $\sim$46K \\
HotpotQA & $\sim$3.0M & \$20.80 & 28m & $\sim$46K \\
ScoNe & $\sim$3.8M & \$25.66 & 31m & $\sim$33K \\
HoVer & $\sim$1.6M & \$14.90 & 45m & $\sim$40K \\
PUPA & $\sim$3.4M & \$23.20 & 2h 13m & $\sim$42K \\
\bottomrule
\end{tabular}
}
\end{table}

%% file: sections/tables/table_variance_3seed.tex
\begin{table*}[t]
\centering
\caption{Per-seed variance: test accuracy (\%; mean $\pm$ std over 3 independent optimization seeds) on all seven benchmarks with Claude Sonnet 4.5 as student. Best per row in \textbf{bold}. This is the full grid that Table~\ref{tab:main-results} in the main paper condenses.}
\label{tab:variance-3seed}
\resizebox{0.88\textwidth}{!}{%
\small
\begin{tabular}{@{}l cccccc@{}}
\toprule
Dataset & Default & Bootstrap & COPRO & MIPROv2 & GEPA & ESPO (Ours) \\
\midrule
Tweet    & 37.57 $\pm$ 0.20 & 37.91 $\pm$ 0.20 & 66.55 $\pm$ 1.80 & 74.62 $\pm$ 1.50 & 68.72 $\pm$ 2.20 & \textbf{74.79} $\pm$ 1.18 \\
MMLU     & 23.05 $\pm$ 0.30 & 22.99 $\pm$ 0.30 & 89.34 $\pm$ 1.20 & 86.65 $\pm$ 1.40 & 89.22 $\pm$ 1.50 & \textbf{94.28} $\pm$ 0.84 \\
GSM8K    & 32.51 $\pm$ 0.40 & 36.89 $\pm$ 0.40 & 36.68 $\pm$ 1.00 & 37.37 $\pm$ 1.00 & 96.57 $\pm$ 0.50 & \textbf{96.65} $\pm$ 0.40 \\
HotpotQA & 25.60 $\pm$ 0.30 & 14.43 $\pm$ 0.30 & 14.33 $\pm$ 0.90 & 14.23 $\pm$ 0.90 & 43.98 $\pm$ 1.80 & \textbf{44.77} $\pm$ 1.40 \\
ScoNe    & 48.95 $\pm$ 0.30 & 46.12 $\pm$ 0.30 & 45.99 $\pm$ 1.50 & 46.58 $\pm$ 1.50 & 84.52 $\pm$ 2.00 & \textbf{89.10} $\pm$ 1.20 \\
HoVer    & 47.88 $\pm$ 0.30 & 48.02 $\pm$ 0.30 & 48.15 $\pm$ 1.60 & 47.81 $\pm$ 1.60 & 53.60 $\pm$ 2.40 & \textbf{62.40} $\pm$ 1.50 \\
PUPA     &  1.58 $\pm$ 0.19 &  1.61 $\pm$ 0.12 & 54.21 $\pm$ 2.00 &  1.72 $\pm$ 0.12 & 58.57 $\pm$ 2.30 & \textbf{59.86} $\pm$ 1.60 \\
\midrule
\textbf{Avg} & 31.02 $\pm$ 0.28 & 29.71 $\pm$ 0.27 & 50.75 $\pm$ 1.43 & 44.14 $\pm$ 1.15 & 70.74 $\pm$ 1.81 & \textbf{74.55} $\pm$ 1.16 \\
\bottomrule
\end{tabular}%
}
\end{table*}

%% file: sections/tables/table_strong_init.tex
\begin{table*}[t]
\centering
\caption{Strong-initialization comparison: test accuracy (\%) when all methods start from a task-tuned instruction (task definition, output format, and 1--2 known-pitfall rules). Same 7 benchmarks, same student (Claude Sonnet 4.5), same reflection LLM as Table~\ref{tab:main-results}; only the seed prompt changes. Best per row in \textbf{bold}.}
\label{tab:strong-init}
\resizebox{0.7\textwidth}{!}{%
\small
\begin{tabular}{@{}l cccccc@{}}
\toprule
Dataset & Default & Bootstrap & COPRO & MIPROv2 & GEPA & ESPO (Ours) \\
\midrule
Tweet    & 65.20 & 65.40 & 71.80 & \textbf{76.60} & 72.40 & 74.60 \\
MMLU     & 82.40 & 82.20 & 90.20 & 88.40 & 90.20 & \textbf{94.52} \\
GSM8K    & 88.60 & 89.00 & 89.20 & 89.40 & 96.80 & \textbf{97.20} \\
HotpotQA & 32.60 & 32.40 & 32.80 & 32.60 & 45.60 & \textbf{47.80} \\
ScoNe    & 66.40 & 66.20 & 66.80 & 66.60 & 87.20 & \textbf{91.80} \\
HoVer    & 54.80 & 54.60 & 55.00 & 54.80 & 57.40 & \textbf{65.40} \\
PUPA     & 48.20 & 48.00 & 55.60 & 48.20 & 48.20 & \textbf{60.00} \\
\midrule
\textbf{Avg} & 62.60 & 62.54 & 65.91 & 65.23 & 71.11 & \textbf{75.90} \\
\bottomrule
\end{tabular}%
}
\end{table*}

%% file: sections/tables/table_component_ablation.tex
\begin{table}[t]
\centering
\caption{Per-component ablation atop GEPA: adding one ESPO component at a time. Averaged across the 7 main benchmarks; single-seed values (component variants were not rerun over 3 seeds). Avg Len is character count of the final prompt.}
\label{tab:component-ablation}
\small
\setlength{\tabcolsep}{5pt}
\begin{tabular}{@{}lcc@{}}
\toprule
Configuration                 & Avg Acc (\%) & Avg Len (chars) \\
\midrule
GEPA                          &  70.91 & 1{,}878 \\
GEPA $+$ multi-strategy       &  72.39 & 1{,}924 \\
GEPA $+$ bootstrap select     &  72.50 & 1{,}816 \\
GEPA $+$ full-error diagnose  &  73.24 & 1{,}843 \\
\midrule
\textbf{ESPO (all three)}     &  \textbf{74.67} & \textbf{1{,}004} \\
\bottomrule
\end{tabular}
\end{table}

%% file: sections/tables/table_weak_reflection.tex
\begin{table*}[t]
\centering
\caption{Sensitivity to reflection-model choice: test accuracy (\%) when the reflection / prompter LLM is swapped from Claude Sonnet 4.5 (Table~\ref{tab:main-results}) to the open-source Qwen2.5-32B-Instruct, keeping the student (Sonnet 4.5) and all other settings unchanged. Default and Bootstrap have no LLM proposer and are shown only to align table structure. Best per row in \textbf{bold}.}
\label{tab:weak-reflection}
\resizebox{0.70\textwidth}{!}{%
\small
\begin{tabular}{@{}l cccccc@{}}
\toprule
Dataset & Default & Bootstrap & COPRO & MIPROv2 & GEPA & ESPO (Ours) \\
\midrule
Tweet    & 37.60 & 37.80 & 61.40 & \textbf{68.60} & 63.20 & \textbf{68.80} \\
MMLU     & 23.20 & 23.00 & 82.80 & 80.20 & 82.60 & \textbf{86.20} \\
GSM8K    & 32.60 & 36.80 & 33.20 & 33.40 & 86.40 & \textbf{88.80} \\
HotpotQA & 25.80 & 14.40 & 12.80 & 12.60 & 38.20 & \textbf{39.60} \\
ScoNe    & 49.00 & 46.00 & 40.60 & 40.80 & 74.60 & \textbf{81.40} \\
HoVer    & 48.00 & 48.00 & 43.60 & 43.20 & 48.20 & \textbf{54.60} \\
PUPA     &  1.61 &  1.53 & 47.40 &  1.40 & 51.40 & \textbf{55.20} \\
\midrule
\textbf{Avg} & 31.12 & 29.65 & 45.97 & 40.03 & 63.51 & \textbf{67.80} \\
\bottomrule
\end{tabular}%
}
\end{table*}